\newcommand{\bX}{\mathbf{X}}
\newcommand{\bY}{\mathbf{Y}}
\newcommand{\bx}{\mathbf{x}}
\newcommand{\BNN}{{\text{BNN}}}
\newcommand{\bp}{{\mathbf{p}_{\BNN}}}
\newcommand{\cDg}{{\mathcal{D}_{\text{grid}}}}
\newcommand{\cI}{\mathcal{I}}
\newcommand{\cH}{\mathcal{H}}
\newcommand{\cT}{\mathcal{T}}
\newcommand{\cD}{\mathcal{D}}
\newcommand{\cX}{\mathcal{X}}
\newcommand{\cY}{\mathcal{Y}}
\newcommand{\bR}{\mathbb{R}}
\newcommand{\bE}{\mathbb{E}}
\titlespacing{\section}{0pt}{1ex}{0.8ex}
\titlespacing{\subsection}{0pt}{0.5ex}{0ex}
\titlespacing{\subsubsection}{0pt}{0.2ex}{0ex}
\def\expandafter\normalsize\expandafter{%
     \normalsize
     \setlength\abovedisplayskip{2pt}
     \setlength\belowdisplayskip{2pt}
     \setlength\abovedisplayshortskip{0pt}
     \setlength\belowdisplayshortskip{0pt}
}
\newtheoremstyle{slanted}
  {.5em plus .1em minus .1em}
  {0.3em plus .1em minus .1em}
  {\itshape}
  {}
  {\bfseries}
  {.}
  {0.5em}
  {}
\renewenvironment{proof}[1][\proofname]{\par
  \vspace{-\topsep}
  \pushQED{\qed}%
  \normalfont
  \topsep0pt \partopsep0pt 
  \trivlist
  \item[\hskip\labelsep
        \itshape
    #1\@addpunct{.}]\ignorespaces
}{%
  \popQED\endtrivlist\@endpefalse
  \addvspace{6pt plus 6pt} 
}
\theoremstyle{slanted}
\newtheorem{theorem}{Theorem}
\newtheorem{lemma}{Lemma}
\newtheorem{assumption}{Assumption}
\newtheorem{definition}{Definition}
\title{Sufficient Conditions for Idealised Models to Have No Adversarial Examples: a Theoretical and Empirical Study with Bayesian Neural Networks}
\author{
  Yarin Gal \\
  Department of Computer Science\\
  University of Oxford\\
  United Kingdom \\
  \texttt{yarin@cs.ox.ac.uk} \\
   \And
  Lewis Smith \\
  Department of Computer Science\\
  University of Oxford\\
  United Kingdom \\
   \texttt{lsgs@robots.ox.ac.uk}
}
\begin{document}

\maketitle

\vspace{-5mm}
\begin{abstract}
\vspace{-4mm}
We prove, under two sufficient conditions, that idealised models can have no adversarial examples. We discuss which idealised models satisfy our conditions, and show that idealised Bayesian neural networks (BNNs) satisfy these. We continue by studying near-idealised BNNs using HMC inference, demonstrating the theoretical ideas in practice. We experiment with HMC on synthetic data derived from MNIST for which we know the ground-truth image density, showing that near-perfect epistemic uncertainty correlates to density under image manifold, and that adversarial images lie off the manifold in our setting. This suggests why MC dropout, which can be seen as performing approximate inference, has been observed to be an effective defence against adversarial examples in practice; We highlight failure-cases of non-idealised BNNs relying on dropout, suggesting a new attack for dropout models and a new defence as well. Lastly, we demonstrate the defence on a cats-vs-dogs image classification task with a VGG13 variant.
\end{abstract}

\vspace{-4mm}
\section{Introduction}
\label{introduction}

Adversarial examples, inputs to machine learning models that an adversary designs to manipulate model output, pose a major concern in machine learning applications. 
Many hypotheses have been suggested in the literature trying to explain the existence of adversarial examples.
For example, \citet{tanay2016boundary} hypothesise that these examples lie near the decision boundary, while \citet{nguyen2015deep} hypothesise that these examples lie in low density regions of the input space. However, adversarial examples can lie far from the decision boundary (e.g.\ ``garbage'' images \citep{nguyen2015deep}), and using a simple spheres dataset it was shown that adversarial examples can exist in high density regions as well \citep{advspheres2018}.
In parallel work following \citet{nguyen2015deep}'s low-density hypothesis, \citet{li2018generative} empirically modelled input image density on MNIST and successfully detected adversarial examples by thresholding low input density. 
This puzzling observation, seemingly inconsistent with the spheres experiment in  \citep{advspheres2018}, suggests that perhaps additional conditions beyond possessing the input density have led to the observed robustness by \citet{li2018generative}. 

Suggesting two sufficient conditions, here we prove that an idealised model (in a sense defined below) cannot have adversarial examples, neither in low density nor in high density regions of the input space.
We concentrate on adversarial examples in discriminative classification models, models which are used in practical applications.
To formalise our treatment, and to gain intuition into the results, we use tools such as discriminative Bayesian neural network (BNN) classifiers \citep{mackay1992practical, neal1995bayesian} together with their connections to modern techniques in deep learning such as stochastic regularisation techniques \citep{gal2016uncertainty}. This pragmatic Bayesian perspective allows us to shed some new light on the phenomenon of adversarial examples. 
We further discuss which models other than BNNs abide by our conditions.
Our hypothesis suggests \textit{why} MC dropout-based techniques are sensible for adversarial examples identification, and why these have been observed to be consistently effective against a variety of attacks \citep{li2017dropout, feinman2017detecting, rawat2017adversarial, carlini2017adversarial}.

We support our hypothesis mathematically and experimentally using HMC and dropout inference. 
We construct a synthetic dataset derived from MNIST for which we can calculate ground truth input densities, and use this dataset to demonstrate that model uncertainty correlates to input density, and that under our conditions this density is low for adversarial examples. Using our new-found insights we develop a new attack for MC dropout-based models which does not require gradient information, by looking for ``holes'' in the epistemic uncertainty estimation, i.e.\ imperfections in the uncertainty approximation, and suggest a mitigation technique as well. We give illustrative examples using MNIST \citep{lecun1998mnist}, and experiment with real-world cats-vs-dogs image classification tasks \citep{elson2007asirra} using a VGG13 variant \citep{Simonyan15}.

\section{Related Literature}

There has been much discussion in the literature about the nature of ``adversarial examples''.
Introduced in \citet{szegedy2013intriguing} using gradient crafting techniques for image inputs, these were initially hypothesised to be similar to the rational numbers, a dense set within the set of all images\footnote{This was refuted in \citep{goodfellow2014explaining}; Below we will see another simple theoretical argument refuting this hypothesis.}. 
\citet{szegedy2013intriguing}'s gradient based crafting method performed a \textit{targeted} attack, where an input image is perturbed with a small perturbation to classify differently to the original image class.
Follow-up research by \citet{goodfellow2014explaining} introduced \textit{non-targeted} attacks, where a given input image is perturbed to an arbitrary wrong class by following the gradient away from the image label. This crafting technique also gave rise to a new type of adversarial examples, ``garbage'' images, which look nothing like the original training examples yet classify with high output probability. 
\citet{goodfellow2014explaining} showed that the deep neural networks' (NNs) non-linearity property is not the cause of vulnerability to adversarial examples, by demonstrating the existence of adversarial examples in linear models as well. They hypothesised that NNs are very linear by design and that in high-dimension spaces this is sufficient to cause adversarial examples. Later work studied the linearity hypothesis further by constructing linear classifiers which do not suffer from the phenomenon \citep{tanay2016boundary}.
Instead, \citet{tanay2016boundary} argued that adversarial examples exist when the classification boundary lies close to the manifold of sampled data.

After the introduction of adversarial examples by \citet{szegedy2013intriguing}, \citet{nguyen2015deep} developed crafting techniques which do not rely on gradients but rather use genetic algorithms to generate ``garbage'' adversarial examples. \citet{nguyen2015deep} further hypothesised that such adversarial examples have low probability under the data distribution, and that joint density models $p(\bx, y)$ will be more `robust' because the low marginal probability $p(\bx)$ would be indicative of an example being adversarial. \citet{nguyen2015deep} argued that this mitigation is not practical though since current generative models do not scale well to complex high-dimensional data distributions such as ImageNet. \citet{li2018generative} recently extended these ideas to non-garbage adversarial examples as well, and lent support to the hypothesis by showing on MNIST that a deep naive Bayes classifier (a generative model) is able to detect targeted adversarial examples by thresholding low input density. 
Parallel work to \citet{li2018generative} has also looked at the hypothesis of adversarial examples having to exist in low input density regions, but proposed that adversarial examples \textit{can} exist in high density regions as well. More specifically, \citet{advspheres2018} construct a simple dataset composed of a uniform distribution over two concentric spheres in high dimensions, with a deterministic feed-forward NN trained on 50M random samples from the two spheres. They propose an attack named ``manifold attack'' which constrains the perturbed adversarial examples to lie on one of the two concentric spheres, i.e.\ in a region of high density, and demonstrate that the attack successfully finds adversarial examples with a model trained on the spheres dataset. This demonstration that there \emph{could exist} adversarial examples on the data manifold and in high input density regions falsifies the hypothesis that adversarial examples must exist in low density regions of the input space, and is seemingly contradictory to the evidence presented in \citet{li2018generative}. We will resolve this inconsistency below.

A parallel line to the above research has tried to construct bounds on the minimum magnitude of the perturbation required for an image to become adversarial. \citet{fawzi2018analysis} for example quantify ``robustness'' using an introduced metric of expected perturbation magnitude and derive an upper bound on a model's robustness. \citet{fawzi2018analysis}'s derivation relies on some strong assumptions, for example assuming that it is feasible to compute the distance between an input $\bx$ and the set $\{\bx : f(\bx)>0 \}$ for some classifier $f(\bx)$. \citet{papernot2016distillation} further give a definition of a robust model, extending the definitions of \citet{fawzi2018analysis} to targeted attacks, and propose a model to satisfy this definition. In more recent work, \citet{peck2017lower} extend on both these ideas (\citep{fawzi2018analysis}, \citep{papernot2016distillation}), and propose a lower bound on the robustness to perturbations necessary to change the classification of a neural network. \citet{peck2017lower} also make strong assumptions in their premise, assuming the existence of an oracle $f^*(\bx)$ able to assign a ``correct'' label for each input $\bx \in \mathbb{R}^D$. This assumption is rather problematic since it implies that any input has a ``correct'' class, including completely blank images which have no objects in them. Lastly, \citet{hein2017formal}, working in parallel to \citep{peck2017lower}, use alternative assumptions and instead offer a bound relying on local Lipschitz continuity.

Following the perturbation bounds literature, in this work we will use similar but simpler tools, relying on the continuity of the classifier alone. 
Contrary to the generative modelling perspective, we will concentrate on \textit{discriminative} Bayesian models which are much easier to scale to high-dimensional data \citep{KendallGal2017UncertaintiesB}. Such models capture information about the density of the training set as we will see below. 
We will define our idealised models under some strong assumptions (as expected from an \textit{idealised} model), in a similar fashion to previous research concerned with provable guarantees. However below we will also give empirical support demonstrating the ideas we develop with practical tools.
The class of models which satisfy our conditions postulated below includes models other than BNNs, such as RBF networks and nearest neighbour in feature space. Even so, we will formalise our arguments in `BNN terminology' to keep precise and rigorous language. After laying out our ideas, below we will discuss which other models our results extend to as well.

\section{Background}

A deep neural network for classification is a function
$f : \bR^D \mapsto \cY$ from an input space $\bR^D$ (e.g.\ images) to a set of labels (e.g.\ $\{0, 1\}$). The network $f$ is parametrised by a set of weights and biases $\omega = \{\mathbf{W}_l, \mathbf{b}_l\}_{l=1}^L$, which are generally chosen to minimize some empirical risk $E : \cY \times \cY \mapsto \bR $ on the model outputs and the target outputs over some dataset $\bX = \{\bx_i\}_{i=1}^N, \bY=\{y_i\}_{i=1}^N$ with $\bx_i \in \bR^D$ and $y_i \in \cY$. 
Rather than thinking of the weights as fixed parameters to be optimized over, the Bayesian approach is to treat them as random variables, and so we place a prior distribution \(p(\omega)\) over the weights of the network. If we also have a likelihood function \(p(y \mid \bx, \omega)\) that gives the probability of
\(y \in \cY\) given a set of parameter values and an input to the network, then we can conduct \textit{Bayesian inference} given a dataset by marginalising (integrating out) the parameters. Such models are known as \textit{Bayesian neural networks} \citep{mackay1992practical, neal1995bayesian}. The conditional probability of the model parameters $\omega$ given a training set $\bX, \bY$ is known as the posterior distribution.
Ideally we would integrate out our uncertainty by taking the expectation of the predictions over the posterior, rather than using a point estimate of the parameters (e.g.\ MAP, the maximiser of the posterior). For deep Bayesian neural networks this marginalisation cannot be done analytically. Several approximate inference techniques exist, and here we will concentrate on two of them. Hamiltonian Monte Carlo (HMC) \citep{neal1995bayesian} is considered to be the `gold-standard' in inference, but does not scale well to large amounts of data. It has been demonstrated to give state-of-the-art results on many small-scale tasks involving uncertainty estimation in non-tractable models \citep{neal1995bayesian}.
A more pragmatic alternative is approximate variational inference, e.g.\ with dropout approximating distributions \citep{gal2016uncertainty}. This technique is known to scale to large models, preserving model accuracy, while giving useful uncertainty estimates for various down-stream tasks \citep{KendallGal2017UncertaintiesB}. However, dropout approximate inference is known to give worse calibrated approximating distributions, a fact we highlight below as well.

Bayesian neural networks are tightly connected to Gaussian processes \citep{Rasmussen2005Gaussian}, and in fact the latter Gaussian processes can be seen as the infinite limit of single hidden layer Bayesian neural networks with Gaussian priors over their weights \citep{neal1995bayesian}. Both can quantify ``epistemic uncertainty'': uncertainty due to our lack of \textit{knowledge}. In terms of machine learning, this corresponds to a situation where our model output is poorly determined due to lack of data near the input we are attempting to predict an output for. This is distinguished from ``aleatoric uncertainty'' (which we will refer to below as \textit{ambiguity}) which is due to genuine stochasticity in the data \citep{KendallGal2017UncertaintiesB}: This corresponds to \textit{noisy} data, for example digit images that can be interpreted as either 1 or 7; no matter how much data the model has seen, if there is inherent noise in the labels then the best prediction possible may be a
high entropy one (for example, if we train a model to predict fair coin flips, the best prediction is the max-entropy distribution \(P(\text{heads}) = P(\text{tails})\)).

An attractive measure of uncertainty able to distinguish epistemic from aleatoric examples is the information gain between the model parameters and the data. Recall that the mutual information (MI) between two random variables (r.v.s) \(X\) and \(Y\) is given by
\begin{align*}
  \cI(X,Y) &= \cH[X] - \bE_{P(y)} [\cH [X \mid Y]]
         = \cH[Y] - \bE_{P(X)} [\cH [Y \mid X]]
\end{align*}
with $\cH [X]$ the entropy of r.v.\ $X$.
In terms of machine learning, the amount of information we would gain about the model parameters r.v.\ $\omega$ if we were to receive a label realisation for the r.v.\ \(y\) for a new input \(\bx\), given the dataset \(\cD\), is then given by the difference between the predictive entropy $\cH[y\mid \bx,\mathcal{D}]$ and the expected entropy $\bE_{p(\omega \mid \cD)} [\cH[y \mid \bx, \omega]]$:
\begin{align}
  \cI(\omega, y \mid \cD, \bx) &= \cH[y\mid \bx,\cD]
                                     - \bE_{p(\omega \mid \cD)} [\cH[y \mid \bx, \omega]].
\end{align}
Being uncertain about an input point $\bx$ implies that if we knew the label at that point we would gain information. Conversely, if the function output at an input is already well determined, then we would gain little information from obtaining the label. Thus, the MI is a measurement of the model's \textit{epistemic} uncertainty (in contrast to the predictive entropy which is high when \textit{either} the epistemic uncertainty is high \textit{or} when there is ambiguity, e.g.\ refer to the example of a fair coin toss). Note that the MI is always bounded between 0 and the predictive entropy.

To gain intuition into the different types of  uncertainty in BNNs we shall look at BNN realisations in function space with a toy dataset. 
Our BNN defines a distribution over NN parameters, which induces a distribution over functions from the input space to the output space.
Drawing multiple function realisations we see (Fig. \ref{fig:sect4:GP}) that \textit{all} functions map the training set inputs to the outputs, but each function takes a different, rather arbitrary, value on points not in the train set. Assessing the discrepancy of these functions on a given input allows us to identify if the tested point is near the training data or not. In classification, having high enough discrepancy between the pre-softmax functions' values for a fixed input leads to lower output probability when averaged over the post-softmax values.
Thus any input far enough from the training set will have low output probability.

\begin{figure}[t]
\vspace{-4mm}
    \centering
    \begin{subfigure}[b]{0.6\textwidth}
        \includegraphics[width=\textwidth]{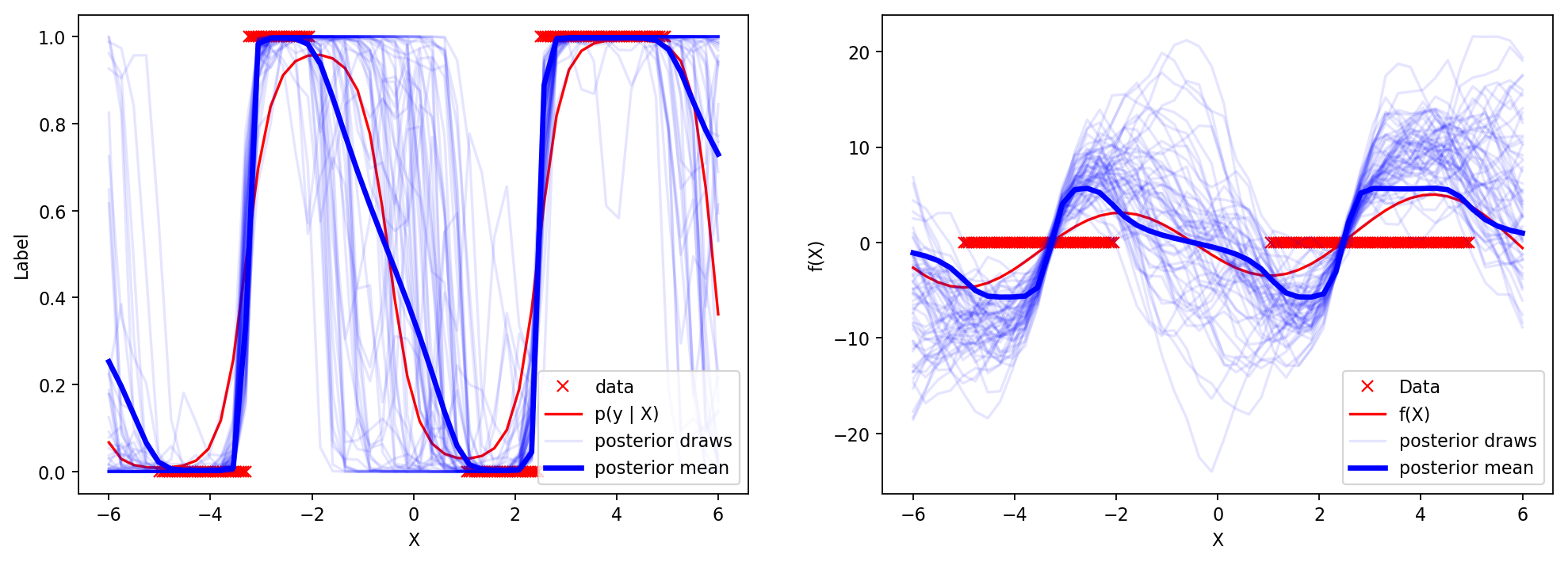}
    \end{subfigure}
    ~ 
    \begin{subfigure}[b]{0.3\textwidth}
        \includegraphics[width=\textwidth]{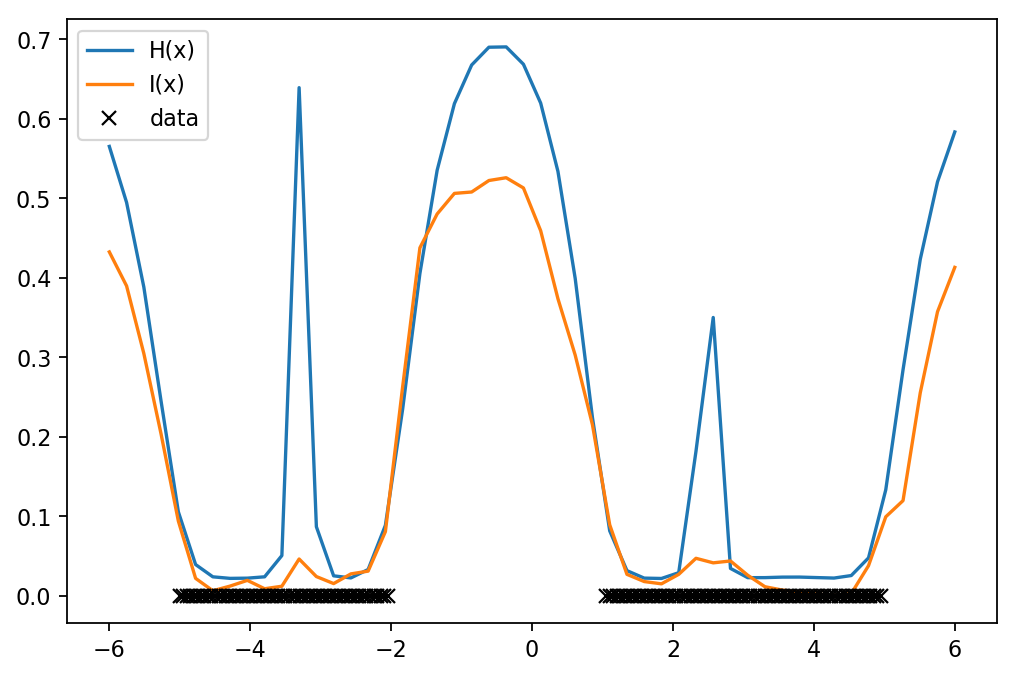}
    \end{subfigure}
    \caption{Illustration of function realisations in softmax space (left), in logit space (pre-softmax, middle), as well as epistemic (orange, right) and aleatoric uncertainty (blue, right). Note the high epistemic uncertainty ($\cI$) in regions of the input space where many function explanations exist, and how predictive probability mean (dark blue, left panel) is close to uniform in these areas. Also note aleatoric uncertainty $\cH$ spiking in regions of ambiguity (transition from class 0 to class 1, depicted in the left panel).}\label{fig:sect4:GP}
\end{figure}

\section{Preliminaries and Intuition}
\label{sect:prelimenaries}

We start by informally discussing the sufficient conditions for idealised models---informally, models with zero training loss---to be robust to adversarial examples. We will give some simple examples to depict the intuition behind these conditions. In the next section we will formalise the conditions with a rigorous presentation and prove that under these conditions a model cannot have adversarial examples. 

We need two key idealised properties to hold in order for a model not to have adversarial examples: idealised architecture (i.e.\ the model is invariant to all transformations the data distribution is invariant to), and ability to indicate when an input lies far from the valid input points (e.g.\ uncertainty is higher than some $\epsilon$, or the nearest neighbour is further than some $\delta$, in either case indicating `don't know' by giving a low confidence prediction). The first property ensures the model has high coverage, i.e.\ generalises well to all inputs the data distribution defines as `similar' to train points. The second property ensures the model can identify points which are far from all previously observed points (and any transformations of the points that the data distribution would regard as the same). 
Together, given a non-degenerate train set sampled from the data distribution,
these two properties allow us to define an idealised model that would accepts and classify correctly all points one would define as a valid inputs to the model, and reject all other points.

The core idea of our proof is that a continuous classification model output doesn't change much within small enough neighbourhoods of points `similar' to the training set points, at least not enough to change the 
training points' predictions
by more than some $\epsilon$. 
A main challenge in carrying out a non-vacuous proof is to guarantee that such models generalise well, i.e.\ have high coverage. This is a crucial property, since many models are `trivially' robust to adversarial examples by simply rejecting anything which is not identical to a previously observed training point.
To carry out our proof we therefore implicitly augment the train set using all transformations $T \in \cT$ extracted from the model and to which the model is invariant (and by the first condition, to which the data generating distribution is invariant). These transformations are implicitly extracted from the model architecture itself: For example, a translation invariant model will yield a train set augmented with translations. Thus the augmented train set might be infinite. We stress though that we don't change the train set for the \emph{model training phase}; the augmented train set is only used to carry out the proof. 
In practice one builds the transformations the data distribution is invariant to into the model.

The implicitly augmented training set is used to avoid the degeneracy of the model predicting well on the train set but not generalising to unseen points.
To gain more intuition into the role and construction of the set of transformations $\cT$, recall the spheres dataset from \citep{advspheres2018}, built of two concentric spheres each labelled with a different class.
If it were possible to train a model \textit{perfectly} with \emph{all} sphere points, then the model could not have adversarial examples on the sphere because each point on the sphere must be classified with the correct sphere label. However it is impossible to define a loss over an infinite training set in practice, and a practical alternative to training the model with infinite training points is to build the invariances we have in the data distribution \textit{into} our model. In the case of the spheres dataset we build a rotation invariance into the model. Since our model is now rotation invariant it is enough to have a single training point from each sphere in order for the model to generalise to the entire data distribution, therefore a model trained with only two data points will generalise well (have high coverage).
A rotation invariant model trained with the two points is thus identical to an idealised model trained with the infinite number of points on the sphere.
Formalising these ideas with the spheres example, in our proof below we rely on the implicitly constructed set of rotations $\cT$; In the proof our train set (the two points) is augmented with the set of all rotations, thus yielding a set containing all points from the two spheres---in effect \textit{implicitly constructing} an idealised model.

We next formalise the ideas above. Although the language we use next is rooted in BNNs, we will generalise these results to other idealised models in the following section.

\section{Theoretical Justification}
\label{sect:just}

We now show that \textit{idealised} discriminative Bayesian neural networks, capturing perfect epistemic uncertainty and data invariances, cannot have adversarial examples. Here we follow \citep{nguyen2015deep, papernot2016distillation} where an adversarial example is defined as follows.
\begin{definition}
An adversarial example is a model input which either
\begin{enumerate}
\item
lies far from the training data but is still classified with high output probability (e.g.\ `garbage' images), or 
\item
an example which is formed of an input $\bx$ which classifies with high output probability, and a small perturbation $\eta$, s.t.\ a prediction on $\bx+\eta$ is also made with high output probability, and the predicted class on $\bx$ differs from the predicted class on $\bx+\eta$. The perturbation $\eta$ can be either perceptible or imperceptible.
\end{enumerate}
\end{definition}
Note that other types of adversarial examples exist in the literature \citep{yang2017malware, grosse2017adversarial, Kreuk2018}.

We start by setting our premise. We will develop our proof for a binary classification setting with continuous models (i.e.\ the model is \textit{discriminative} and its output is a single probability $p$ between 0 and 1, continuous with the input $\bx$), with a finite training set $\bX \in \bR^{N \times D}, \bY \in \{0, 1\}^N$ sampled from some data distribution.
Our first assumption is that the training data has no ambiguity:
\begin{assumption}
There exist no $\bx \in \bX$ which is labelled with both class 0 and class 1.
\end{assumption}
This requirement of lack of ambiguity will be clarified below. We define an $\epsilon$ threshold for a prediction to be said to have been made `with high output probability': $p>1-\epsilon$ is defined as predicting class 1 with high output probability, and respectively $p<\epsilon$ is said to predict class 0 with high output probability (e.g. $\epsilon=0.1$ is a nice choice).

Our first definition is the set containing all transformations $\cT$ that our data is invariant under, e.g.\ $\cT$ might be a set containing translations and local deformations for image data:
\begin{definition}
Let $p(\bx, y)$ be the data distribution $\bX, \bY$ were i.i.d.\ sampled from. Define $\cT$ to be the set of all transformations $T$ s.t.\ $p(y|\bx)=p(y|T(\bx))$ for all $\bx \in \bX, y \in \cY$.
\end{definition}
Note that $\cT$ cannot introduce ambiguity into our training set.
For brevity \textit{in the proof} we overload $\bX$ and use it to denote the augmented training set $\{T(\bx) : \bx \in \bX, T \in \cT\}$, i.e.\ we augment $\bX$ with all the possible transformations on it (note that $\bX$ may now be infinite); We further augment and overload $\bY$ correspondingly so each $T(\bx) \in \bX$ is matched with the label $y$ corresponding to $\bx$. 
Note that to guarantee \textit{full} coverage (i.e.\ all input points with high probability for some $y$ under the data distribution must have high output probability under the model) one would demand $\cX / \cT \subseteq \bX$, i.e.\ every point in the input space must belong to some trajectory generated by some point from the train set, or equivalently, all equivalence classes defined by $\cT$ must be represented in the train set.
We next formalise what we mean by `idealised NN':
\begin{definition}
\label{def:idealised-NN}
We define an `idealised NN' to be a NN which outputs probability 1 for each \textit{training set point} $\bx \in \bX$ with label 1, and outputs probability 0 on training set points $\bx$ with label 0.
We further define a `Bayesian idealised NN' to be a Bayesian model average of idealised NNs (i.e.\ we place a distribution over idealised NNs' weights).
\end{definition}
Note that this definition implies that the NN architecture is invariant to $\cT$, our first condition for a model to be robust to adversarial examples. 
Model output (the predictive probability) for a Bayesian idealised NN is given by Bayesian model averaging: $p(y|\bx, \bX, \bY) = \int p(y | \bx, \omega) p(\omega | \bX, \bY) \text{d} \omega$, which we write as $\bp(y | \bx)$ for brevity. Note that a Bayesian idealised NN must have predictive probabilities taking one of the two values in $\{0, 1\}$ on the training set.

Following \citet{neal1995bayesian} we know that infinitely wide (single hidden layer) BNNs converge to Gaussian processes (GPs) \cite{Rasmussen2005Gaussian}. In more recent results, \citet{matthewsgaussian} showed that even finite width BNNs with more than a single hidden layer share many properties with GPs. Of particular interest to us is the GP's epistemic uncertainty property (uncertainty which can increase `far' from the training data, where far is defined using the GP's lengthscale parameter)\footnote{Note that this property depends on the GP's kernel; we discuss this in the next section.}. 
We next formalise what we mean by `epistemic uncertainty'.
\begin{definition}
We define `epistemic uncertainty' to be the mutual information $\cI(\omega, y | \cD, \bx)$ between the model parameters r.v.\ $\omega$ and the model output r.v.\ $y$.
\end{definition}
Denoting the model output probability $\bp(y|\bx)$ by $p$, we abuse notation slightly and write $\cI(\omega;p), \cH(p)$ instead of $\cI(\omega, y | \cD, \bx), \cH(y|\bx)$ for our Bernoulli r.v.\ $y$ with mean $p$.
Note that the mutual information satisfies $\cH(p) \geq \cI(\omega;p) \geq 0$.
Since we assumed there exists no ambiguous $\bx$ in the dataset $\bX$, we have $\cH(p) = \cI(\omega;p)$ for all $\bx \in \bX$.

Next we introduce a supporting lemma which we will use in our definition of an `idealised BNN':
\begin{lemma}\label{lemma}
Let $\bp(y|\bx)$ be the model output of some Bayesian idealised NN on input $\bx \in \bR^D$ with training set $\bX, \bY$.
There exists $\delta_\BNN^\bx$ for each $\bx \in \bX$ such that the model predicts with high output probability on all $\bx'$ in the delta-ball\footnote{A delta ball around $\bx$ is defined as $B(\bx, \delta)=\{\bx' \in \mathbb{R}^D : ||\bx' - \bx||_2 < \delta\}$.}
$B(\bx, \delta_\BNN^\bx)$.
\end{lemma}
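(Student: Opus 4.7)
The plan is to reduce the claim to a direct application of continuity of $\bp(y\mid\bx)$ at the training point $\bx$, combined with the fact that an idealised BNN must output exactly $0$ or $1$ on every training point.

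First I would verify the endpoint values. By Definition~\ref{def:idealised-NN}, every idealised NN in the support of the prior/posterior outputs either $1$ (on label-$1$ points) or $0$ (on label-$0$ points) for every $\bx \in \bX$. Taking the Bayesian model average $\bp(y\mid\bx) = \int p(y\mid\bx,\omega)\, p(\omega\mid\bX,\bY)\,\mathrm{d}\omega$ then yields $\bp(1\mid\bx)=1$ if $y_\bx = 1$ and $\bp(1\mid\bx)=0$ if $y_\bx=0$, since the integrand is constant in $\omega$ on the training set (this is exactly the reason Assumption~1 was imposed: without lack of ambiguity the integrand could take both values with positive probability and the average would lie strictly between $0$ and $1$).

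Next I would invoke the continuity premise stated at the start of Section~\ref{sect:just}, which asserts that the discriminative model output $\bp(y\mid\bx)$ is continuous in $\bx$. By the $\epsilon$--$\delta$ definition of continuity applied at the point $\bx\in\bX$, for the $\epsilon$ fixed in the text (the threshold defining high output probability) there exists $\delta_\BNN^\bx > 0$ such that
\begin{equation*}
\|\bx' - \bx\|_2 < \delta_\BNN^\bx \;\Longrightarrow\; |\bp(y\mid\bx') - \bp(y\mid\bx)| < \epsilon.
\end{equation*}
Combining with the endpoint values above, if $y_\bx = 1$ then $\bp(1\mid\bx') > 1-\epsilon$ throughout $B(\bx,\delta_\BNN^\bx)$, and if $y_\bx=0$ then $\bp(1\mid\bx') < \epsilon$ throughout the ball. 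Either way, the prediction on every $\bx'\in B(\bx,\delta_\BNN^\bx)$ is made with high output probability, which is exactly the conclusion.

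The only mildly subtle point is justifying that the Bayesian model average inherits pointwise continuity from the pointwise continuity of the individual networks $p(y\mid\bx,\omega)$. I would handle this briefly via dominated convergence: since $p(y\mid\bx,\omega)\in[0,1]$ is bounded and $p(\omega\mid\bX,\bY)$ is a probability measure, continuity of $\bx \mapsto p(y\mid\bx,\omega)$ for each $\omega$ lifts to continuity of $\bx\mapsto\bp(y\mid\bx)$. I expect the only real obstacle to be making this measure-theoretic step airtight (and noting it already follows from the standing assumption that the model is continuous in $\bx$); everything else is immediate from the definitions.
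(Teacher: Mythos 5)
Your proposal is correct and matches the paper's own proof: both arguments note that $\bp(y\mid\bx)$ takes a value in $\{0,1\}$ on training points by the Bayesian idealised NN definition, then apply $\epsilon$--$\delta$ continuity of $\bp(y\mid\bx)$ at $\bx$ to obtain the delta-ball. Your added dominated-convergence remark is a sound (and slightly more careful) justification of a continuity property the paper simply assumes as part of its premise, but it does not change the route of the argument.
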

\begin{proof}
Let $\bx \in \bX$ be a training point. By Bayesian idealised NN definition, $\bp(y | \bx)$ takes a value from $\{0, 1\}$. W.l.o.g.\ assume $\bp(y | \bx)=1$. By continuity of the BNN's output $\bp(y | \bx)$ there exists a $\delta_\BNN^\bx$ s.t.\ all $\bx'$ in the delta ball $B(\bx, \delta_\BNN^\bx)$ have model output probability larger than $1-\epsilon$.
Similarly for $\bp(y | \bx)=0$, there exists a $\delta_\BNN^\bx$ s.t.\ all $\bx'$ in the delta ball $B(\bx, \delta_\BNN^\bx)$ have model output probability smaller than $\epsilon$.
I.e.\ the model output probability is as that of $\bp(y|\bx) \in \{0, 1\}$ up to an $\epsilon$, and the model predicts with high output probability within the delta-ball. 
\end{proof}
\vspace{-3mm}

Finally, we define an `idealised BNN' to be a Bayesian idealised NN which has a `GP like' distribution over the function space (where the GP's kernel should account for the invariances $\cT$ which are built into the BNN model architecture, see for example \citep{van2017convolutional}), and which increases its uncertainty `fast enough'. Or more formally:
\begin{definition}
\label{def:idealised-BNN}
We define an idealised BNN to be a Bayesian idealised NN with epistemic uncertainty higher than $\cH(\epsilon)$ outside $\cD'$, the union of $\delta_\BNN^\bx$-balls surrounding the training set $\bx$ points. 
\end{definition}
This is our second condition which must be satisfied for a model to be robust to adversarial examples.

We now have the tools required to state our main result:
\begin{theorem}
Under the assumptions and definitions above, an idealised Bayesian neural network cannot have adversarial examples.
\end{theorem}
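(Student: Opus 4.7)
The plan is to proceed by contradiction, tackling the two types of adversarial examples from Definition 1 in turn. Both cases hinge on the two already-assembled ingredients: the idealised BNN's uncertainty guarantee outside $\cD'$ (Definition 5), and Lemma 1's guarantee of high output probability throughout every $\delta_\BNN^{\bx}$-ball around a training point. Throughout I will use the standard inequality $\cH(p) \geq \cI(\omega;p)$ for a Bernoulli output and the fact that binary entropy is monotone on either side of $1/2$.

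For a garbage-type adversarial example, suppose some $\bx \notin \cD'$ were classified with high output probability. Definition 5 supplies $\cI(\omega;\bp(y|\bx)) > \cH(\epsilon)$, and the MI-entropy inequality then gives $\cH(\bp(y|\bx)) > \cH(\epsilon)$. Inverting binary entropy forces $\bp(y|\bx) \in (\epsilon, 1-\epsilon)$, contradicting the high-confidence assumption. By contrapositive, every input classified with high output probability must lie in some delta-ball $B(\bx_i, \delta_\BNN^{\bx_i})$ with $\bx_i \in \bX$.

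For a small-perturbation adversarial example, suppose that $\bx$ and $\bx+\eta$ are both classified with high probability but with different predicted classes. The contrapositive just established places $\bx \in B(\bx_i, \delta_\BNN^{\bx_i})$ and $\bx+\eta \in B(\bx_j, \delta_\BNN^{\bx_j})$ for some $\bx_i, \bx_j \in \bX$, and Lemma 1 then identifies the predictions with $y_i$ and $y_j$ respectively, forcing $y_i \neq y_j$. Continuity of $\bp(y|\cdot)$ along the segment $\gamma(t) = \bx + t\eta$, combined with the intermediate value theorem applied to the Bernoulli mean (which must cross from above $1-\epsilon$ to below $\epsilon$), produces some $t^\star \in (0,1)$ with $\bp(y|\gamma(t^\star)) = 1/2$ and thus $\cH(\bp(y|\gamma(t^\star))) = 1 > \cH(\epsilon)$; Lemma 1 therefore places $\gamma(t^\star)$ outside $\cD'$. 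Consequently the path from $\bx$ to $\bx+\eta$ must exit $\cD'$, which lower-bounds $\|\eta\|$ by the distance from $\bx$ to the complement of $B(\bx_i, \delta_\BNN^{\bx_i})$. Equivalently, any $\eta$ small enough to keep $\bx+\eta$ inside that single delta-ball cannot change the prediction, ruling out small-perturbation adversarial examples.

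The main obstacle is that Definition 1 does not intrinsically quantify ``small,'' so the theorem is vacuous without committing to a scale. I would resolve this by reading ``small'' against the architecture-supplied scale $\delta_\BNN^{\bx}$, which Lemma 1 guarantees to be strictly positive at every training point: small perturbations are those that do not escape the delta-ball around the nearest training point. The remaining technicality is that the intermediate value step uses continuity of the marginal predictive $\bp(y|\cdot)$, not merely of each weight sample; this follows from the input-continuity of the network together with enough posterior regularity to exchange limit and integral, both already baked into the idealised setting.
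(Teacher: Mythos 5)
Your proof is correct and rests on the same two pillars as the paper's own argument: the dichotomy between $\cD'$ and its complement, with the chain $\cH(\bp(y|\bx)) \geq \cI(\omega;\bp(y|\bx)) > \cH(\epsilon)$ plus the symmetry of binary entropy forcing $\epsilon \leq \bp(y|\bx) \leq 1-\epsilon$ outside $\cD'$, and Lemma~\ref{lemma}'s delta-balls inside it; your garbage-image case is essentially verbatim the paper's case~1. Where you genuinely diverge is the perturbation case. The paper anchors the perturbation at an (augmented) training point $\bx \in \bX$, so the admissible radius is the full $\delta_\BNN^\bx$, and it disposes of any high-confidence prediction landing in $\cD'$ by observing that Lemma~\ref{lemma} pins that prediction to the label of the surrounding ball's training point --- the prediction is therefore \emph{correct}, hence not adversarial, even if the class changed. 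You instead run an intermediate-value argument along the segment $\gamma(t)=\bx+t\eta$: the predictive must pass through $1/2$, and such a point lies outside $\cD'$ by the \emph{contrapositive} of Lemma~\ref{lemma} (a minor misattribution in your write-up, which cites the lemma directly), yielding a quantitative lower bound on any class-changing $\|\eta\|$. Your route buys something the paper leaves implicit --- an explicit reading of ``small'' as not escaping the surrounding delta-ball, valid for \emph{any} high-confidence input rather than only training points --- at the cost that your bound, the distance from $\bx$ to $\mathbb{R}^D \setminus B(\bx_i, \delta_\BNN^{\bx_i})$, degenerates as $\bx$ approaches the ball boundary, whereas the paper's anchoring at training points sidesteps this but resolves cross-ball label changes by correctness rather than by perturbation magnitude. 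Your two flagged technicalities (that $\epsilon < 1/2$ so entropy inversion works, and that continuity of the marginal $\bp(y|\cdot)$ rather than of individual weight realisations is what the IVT needs) are real but are equally assumed by the paper, which invokes continuity of $\bp(y|\bx)$ directly in the proof of Lemma~\ref{lemma}.
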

\begin{proof}
Let $\bx \in \bX$. By lemma \ref{lemma}, every perturbation $\bx + \eta$ that is under the delta ball $B(\bx, \delta_\BNN^\bx)$ does not change class prediction. 
Further, by the idealised BNN definition and epistemic uncertainty definition, we have that for all $\bx'$ outside $\cD'$, with the model output probability on $\bx'$ denoted as $\bp(y|\bx')$, the entropy satisfies $\cH(\bp(y | \bx')) \geq \cI(\omega;\bp(y | \bx')) > \cH(\epsilon)$.
By symmetry, entropy of $\bp(y | \bx')$ being larger than the entropy of $\epsilon$ means that $\epsilon \leq \bp(y | \bx') \leq 1-\epsilon$, i.e.\ the prediction is with low output probability for both class 0 and class 1.

We have that every $\bx \in \bR^D$ has either 
1) $\bx \not\in \cD'$, in which case $\epsilon \leq \bp(y | \bx) \leq 1-\epsilon$, i.e.\ $\bx$ is classified with low output probability and cannot be adversarial, 
or 
2) $\bx \in \cD'$, in which case $\bx$ is within some delta ball with centre $\bx'$ and label $y'=1$ or $y'=0$. In the former case, $\bp(y | \bx)>1-\epsilon$ i.e.\ $\bx$ is classified correctly with high output probability, and in the latter case $\bp(y | \bx)<\epsilon$, and $\bx$ is still classified correctly with high output probability. 
Since every perturbed input $\bx$ that is under a delta ball does not change the predicted class from that of the training example $\bx'$, $\bx$ cannot be adversarial either.
\end{proof}

Note that the assumption of lack of dataset ambiguity in the proof above can be relaxed, and the proof easily generalises to datasets with more than two classes. Next we look at the proof critically, followed by an assessment of the ideas developed above empirically, approximating the idealised BNN with HMC sampling.

\subsection{Proof critique}
\label{sect:critique}

We start by clarifying why we need to assume no ambiguity in the dataset. Simply put, if we had two pairs $(\bx, 1)$ and $(\bx, 0)$ for some $\bx$ in our dataset, then no NN can be idealised following our definition (i.e.\ give probability 1 to the first observed point and probability 0 to the second). More generally, we want to avoid issues of low predictive probability near the training data; this assumption can be relaxed assuming aleatoric noise and adapting the proof to use the mutual information rather than the entropy. 

We use the idealised model architecture condition (and the set of transformations $\cT$) to guarantee good coverage \emph{in our proof}. 
CNNs (or capsules, etc.) capture the invariances we believe we have in our data generating distribution, which is the `maxim' representation learning uses to generalise well. Note though that it might very well be that the model that we use in practice is not invariant to \emph{all} transformations we would expect the data generating distribution to be invariant to. That would be a failure case leading to limited coverage; Compare to the spheres dataset example -- if our model can't capture the rotation invariances then it might unjustifiably ``reject'' test points (i.e.\ classify them with low output probability, thus reduce coverage). In practice it is very difficult to define what transformations the data distribution is invariant to with real-world data distributions. However, we can estimate model coverage (to guarantee that the model generalises better than a look-up table or nearest neighbours) by empirical means as well.
For example, we observe empirically on a variety of real-world tasks that CNNs have low uncertainty on test images which were sampled from the same data distribution as the train images, as we see in our experiments below and in other works \citep{KendallGal2017UncertaintiesB}. 
In fact, there is a connection between a model's generalisation error and its invariance to transformations to which the data distribution is invariant, which we discuss further in appendix \ref{app:coverage}.
This suggests that existing models with real data do capture sensible invariances from the dataset, enough to be regarded empirically as generalising well.

Next we look at the proof above in a critical way. 
First, note that our argument does not claim the \textit{existence} of an idealised BNN. Ours is not an `existence' proof. Rather, we proved that under the definition above of an idealised BNN, such a BNN cannot have adversarial examples. The interesting question which follows is `do there exist real-world BNNs and inference which approximately satisfy the definition?'. We attempt to answer this question empirically in the next section.
Further note that our idealised BNN definition cannot hold for \textit{all possible} BNN architectures. For a BNN to approximate our definition it has to increase its uncertainty fast enough. 
Empirically, for many practical BNN architectures the uncertainty indeed increases far from the data \citep{gal2016uncertainty}. For example, a single hidden layer BNN with sine activation functions converges to a GP with an RBF kernel as the number of BNN units increases \citep{Gal2015Improving}; Both the RBF GP and the finite BNN possess the desired property of uncertainty increasing far from the training set \citep{Gal2015Improving}. This property has also been observed to hold empirically for deep ReLU BNNs \cite{gal2016uncertainty}. In the same way that our results depend on the model architecture, not all GPs will be robust to adversarial examples either (e.g.\ a GP could increase uncertainty too slowly or not at all); This depends on the choice of kernel and kernel hyper-parameters. 
The requirement for the uncertainty to increase quickly enough within a region where the function does not change too quickly raises interesting questions about the relation between Lipschitz continuity and model uncertainty. We hypothesise that a relation could be established between the Lipschitz constant of the BNN and its uncertainty estimates.

Finally, our main claim in this work is that the \textit{idealised} Bayesian equivalents of \textit{some} of these other practical NN architectures will not have adversarial examples; In the experiments section below we demonstrate that realistic BNN architectures (e.g.\ deep ReLU models for MNIST classification), with near-idealised inference, approximate the property of perfect uncertainty defined above, and further show that practical approximate inference such as dropout inference approximates some of the properties but fails for others.

\subsection{Adversarial examples on the spheres dataset}
\label{sect:spheres}

\citet{advspheres2018} construct adversarial examples by constraining the perturbed example to lie on one of the spheres, i.e.\ in high input density regions. \citet{advspheres2018} then demonstrate that it \textit{is possible} to have adversarial examples lying in both high density and low density regions of the input space. Here we refined this argument and showed that adversarial examples \textit{must exist only} in low density regions of the input space \emph{when the model captures relevant data invariances}; I.e.\ when the model is built to capture the data invariances then adversarial examples must lie in low density regions of the input space and cannot exist in high density regions. 

Further, an idealised BNN which is rotation invariant will increase its uncertainty for off-manifold adversarial examples (since it never saw them before), and the on-manifold examples will be part of the implicit set induced by the invariances and the model will thus classify them correctly. Therefore, the idealised BNN with the rotation invariance will have seen the true labels for all points on the manifold, and being idealised and able to classify such points correctly it will not have adversarial examples, neither on the manifold, nor off the manifold.

\subsection{Generalisation to other idealised models}
\label{sect:generalisation}

Our proof trivially generalises to other idealised models that satisfy the two conditions set above (idealised architecture and idealised ability to indicate invalid inputs -- definition \ref{def:idealised-BNN} for the case of idealised BNN models). In appendix \ref{app:generalisation} we discuss which idealised models other than BNNs satisfy these two conditions, and further justify why we chose to continue our developments below empirically studying near-idealised BNNs.

\section{Empirical Evidence}

In this section we give empirical evidence supporting the arguments above. We demonstrate the ideas using near-perfect epistemic uncertainty obtained from HMC (considered `gold standard' for inference with BNNs \citep{neal1995bayesian}), and with image data for which we know the ground-truth image-space density. We show that image density diminishes as images become adversarial, that uncertainty correlates with image density, and that state-of-the-art adversarial crafting techniques fail with HMC. We then test how these ideas transfer to non-idealised data and models, demonstrating failures of dropout uncertainty on MNIST, and propose a new attack and a mitigation to this attack. We finish by assessing the robustness of our mitigation with a VGG13 variant.

\subsection{Idealised case}

In this subsection we are only concerned with `near idealised' data and inference, assessing the definitions in the previous section. We start by deriving a new image dataset from MNIST \citep{lecun1998mnist}, for which we know the ground truth density in the image space for each example $\bx$, and are therefore able to determine how far away it is from the data distribution. 

Our dataset, Manifold MNIST (MMNIST) was constructed as follows. We first trained a variational auto-encoder (VAE) on MNIST with a 2D latent space. We chose three image classes (0, 1, and 4), discarding the latents of all other classes, and put a small `Gaussian bump' on each latent point from our 3 classes. Summing the bumps for each latent class we get an analytical density corresponding to this class. We then discarded the MNIST latents, and defined the mixture of the 3 analytical densities in latent space as our ground truth image density (each mixture component identified with its corresponding ground truth class). Generating 5,000 samples from this mixture and decoding each sample using our fixed VAE decoder, we obtained our training set for which each image has a ground truth density (Fig.\ \ref{fig:sect6.1:mmnist}, see appendix \ref{app:density} for density calculation). Note that this dataset does not satisfy our lack-of-data-ambiguity assumption above, as seen in the figure.

First we show that the density decreases on average for image $\bx \sim \texttt{MMNIST}$ as we make $\bx$ adversarial (adding perturbations) using a \textit{standard LeNet NN} classifier as implemented in Keras \citep{lecun1998gradient, keras2015}. Multiple images were sampled from our synthetic dataset, with the probability of an image in the input space plotted as it becomes adversarial for both targeted and non-targeted FGM \citep{goodfellow2014explaining} attacks (Fig.\ \ref{fig:sect6.1:density}). 
Together with Fig.\ \ref{fig:sect6.1:density}, trajectories from the targeted attack (FGM) on MMNIST, seen in Fig.\ \ref{fig:sectD:trajectories} in appendix \ref{app:results}, show that even when the adversarial images still resemble the original images, they already have low probability under the dataset. Further, Fig.\ \ref{fig:sectD:accs} shows that the deterministic NN accuracy on these images has fallen, i.e.\ the generated images successfully fool the model.

\begin{figure}[t]
\vspace{-8mm}
\hspace{-1mm}
    \centering
    \begin{minipage}{0.49\textwidth}
        \includegraphics[width=\textwidth]{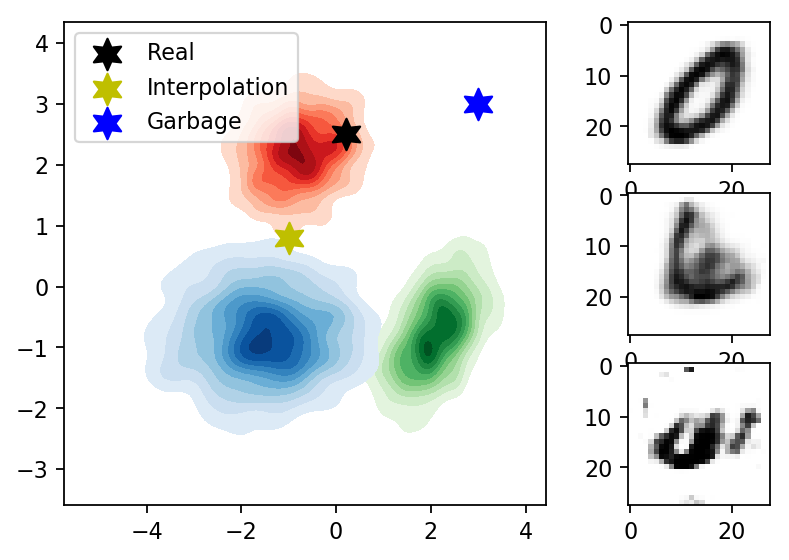}
        \caption{\textbf{Manifold MNIST ground-truth density in 2D latent space with decoded image-space realisations} (a real-looking digit (top), interpolation (middle), and garbage (bottom)).
        }
        \label{fig:sect6.1:mmnist}
    ~ 
    \end{minipage}\hfill
    \begin{minipage}{0.49\textwidth}
    \begin{subfigure}[b]{0.5\textwidth}
        \includegraphics[width=\textwidth]{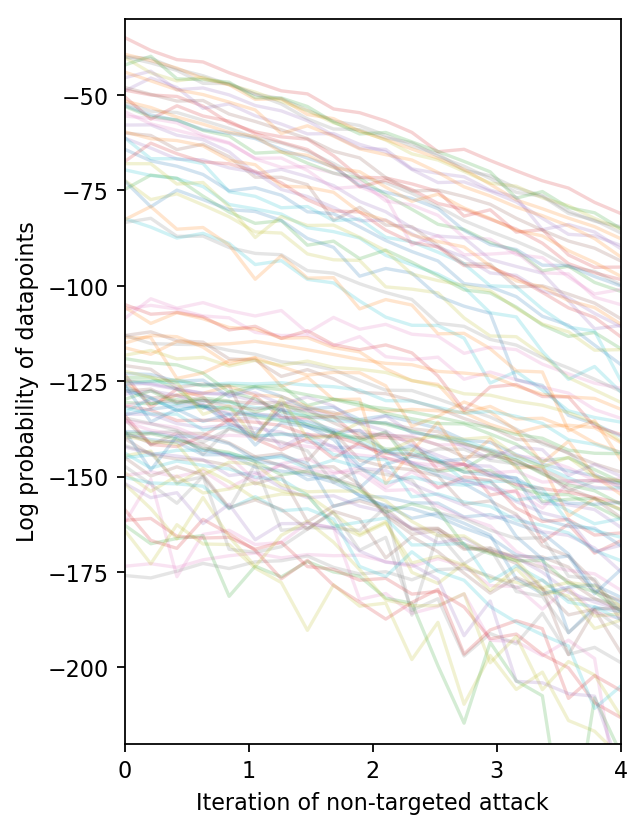}
        \caption{Non-targeted}
    \end{subfigure}~
    \begin{subfigure}[b]{0.5\textwidth}
        \includegraphics[width=\textwidth]{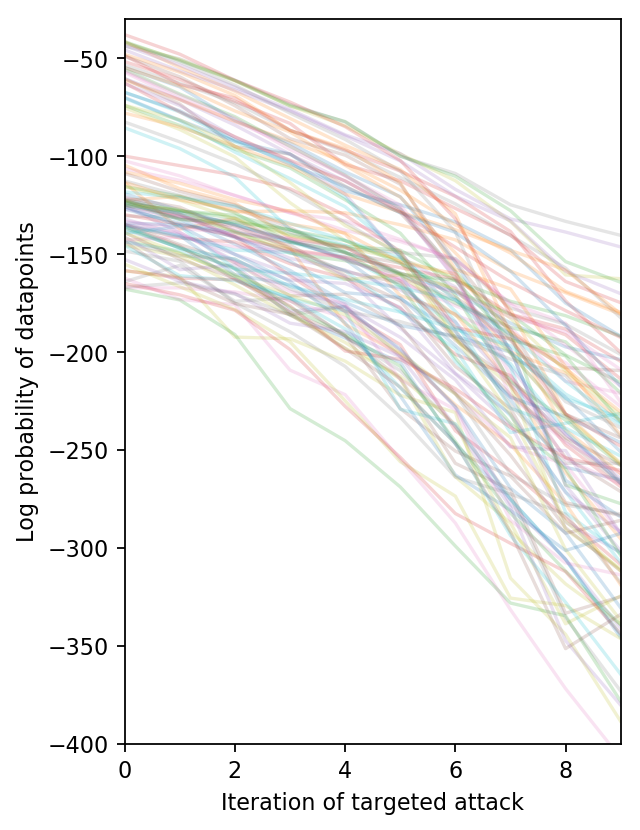}
        \caption{Targeted}
    \end{subfigure}
    \vspace{0mm}
    \caption{Ground-truth density v.s.\ step for FGM attacks on the decoded images with a deterministic classification NN. \textbf{Note the decreasing density as the images become adversarial}.} 
    \label{fig:sect6.1:density}
    \end{minipage}
\vspace{-4mm}
\hspace{-1mm}
\end{figure}

Next, we show that near-perfect epistemic uncertainty correlates to density under the image manifold. We use $\cDg$ given by a grid of equally spaced poitns over the 2D latent space (Fig.\ \ref{fig:sect6.1:grid}).
We used a BNN with LeNet architecture and HMC inference to estimate the epistemic uncertainty (Fig.\ \ref{fig:sect6.1:hmc}, visualised in the VAE latent space; Shown 
in white is uncertainty, calculated by decoding each latent point into image space, and evaluating the MI between the decoded image and the model parameters; A lighter background corresponds to higher uncertainty). In Fig.\ \ref{fig:sect6.1:hmc-density} we show that uncertainty correlates to density on the images from $\cDg$.

\begin{figure}[b]
\vspace{-3mm}
\hspace{1mm}
    \centering
    \begin{minipage}[b]{0.3\textwidth}
        \includegraphics[width=\textwidth]{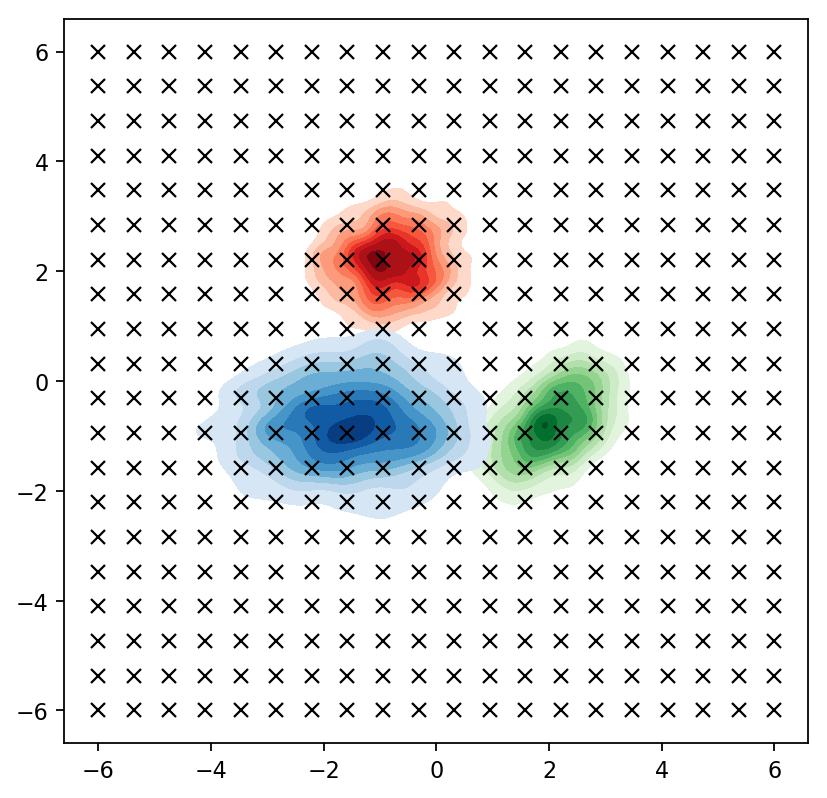}
        \caption{Our $\cDg$ dataset depicted in 2D latent space with crosses overlaid ontop of Manifold MNIST.}
        \label{fig:sect6.1:grid}
    \end{minipage}\hfill
    \begin{minipage}[b]{0.3\textwidth}
        \includegraphics[width=\textwidth]{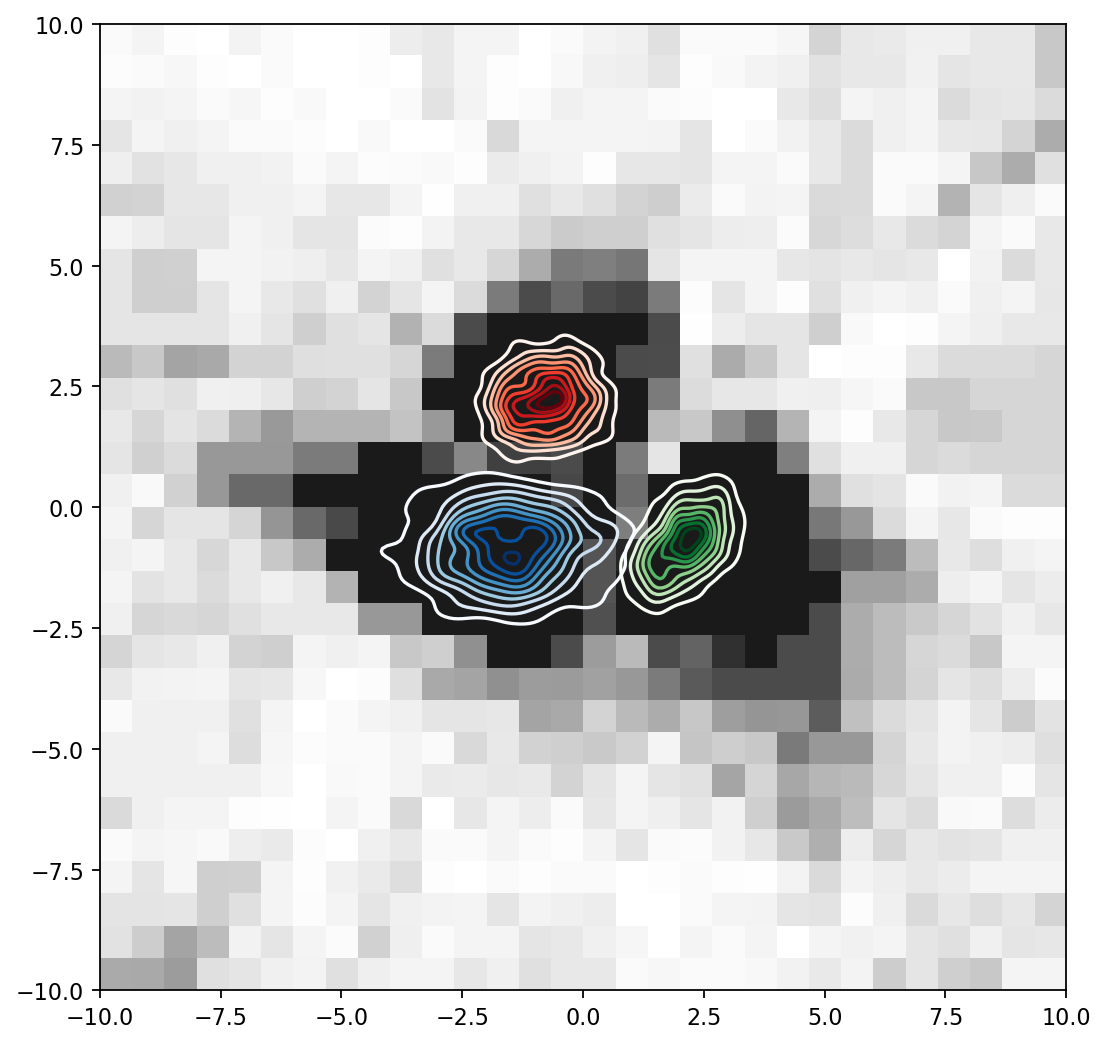}
        \caption{Manifold MNIST 2D latent space with HMC MI projected from image space, showing ``near perfect'' uncertainty.}
        \label{fig:sect6.1:hmc}
    \end{minipage}\hfill
    \begin{minipage}[b]{0.3\textwidth}
        \includegraphics[width=\textwidth]{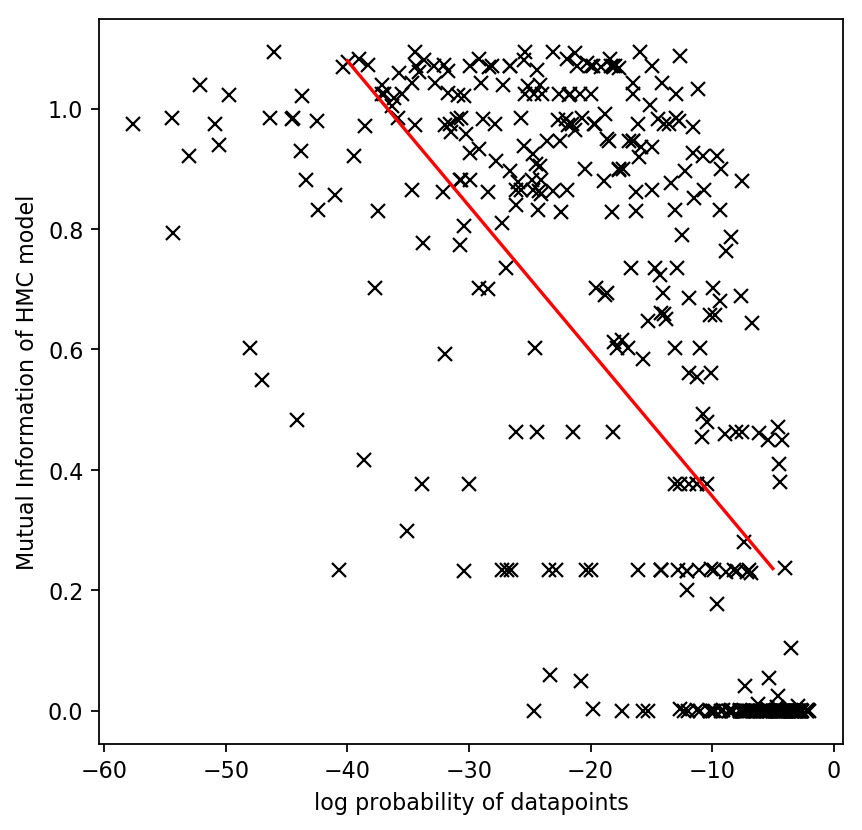}
        \caption{HMC MI v.s.\ log density of $\cDg$ latent points. \textbf{Note the strong correlation between the density and HMC MI}.}
        \label{fig:sect6.1:hmc-density}
    \end{minipage}
\hspace{1mm}
\vspace{-2mm}
\end{figure}

Finally, we show that adversarial crafting fails for HMC. In this experiment we sample a new realisation from the HMC predictive distribution with every gradient calculation, in effect approximating the infinite ensemble defined by an idealised BNN. We used a non-targeted attack (MIM, first place in the NIPS 2017 competition for adversarial attacks \citep{dongboosting}), which was shown to fool finite deterministic ensembles and be robust to gradient noise. Table \ref{table:sect6.1:attack-fail} shows success rate in changing test image labels for HMC and a deterministic NN, for maximum allowed input perturbation of sizes\footnote{Note that here we use $\epsilon$ to denote \textit{maximum perturbation magnitude}, as is common in the literature, not to be confused with our $\epsilon$ from the proof.} $\epsilon \in \{0.1, 0.2\}$, v.s.\ a control experiment of simply adding noise of magnitude $\epsilon$. Also shown average image entropy. Note HMC BNN success rate for the attack is similar to that of the noise, v.s.\ Deterministic where random noise does not change prediction much, but a structured perturbation fools the model \textit{very} quickly. Note further that HMC BNN's entropy increases quickly, showing that the model has many different possible output values for the perturbed images.

\renewcommand{\arraystretch}{1.3}

\begin{table}[h]
\vspace{-5mm}
\begin{center}
\caption{MIM untargeted attack with two max.\ perturbation values $\epsilon$, applied to HMC BNN and Deterministic NN, showing success rate (\textit{lower is better}) in changing true label for attack, changing true label by adding noise of same magnitude (control), and showing average entropy $\cH$ for perturbed images. Shown mean $\pm$ std with 5 experiment repetitions.}
\label{table:sect6.1:attack-fail}
\resizebox{\textwidth}{!}{
\begin{tabular}{l|cccc|cccc}
 & \multicolumn{4}{c}{HMC BNN} & \multicolumn{4}{c}{Deterministic NN} \\ 
 & Adv.\ succ.\ & Noise succ.\ & Adv.\ $\cH$ & Noise $\cH$ & Adv.\ succ.\ & Noise succ.\ & Adv.\ $\cH$ & Noise $\cH$ \\ 
\hline 
\hspace{-2mm}$\epsilon=0.1$\hspace{-1mm} & 
$0.14 \pm 0.03$ & $0.10 \pm 0.01$ & $0.47 \pm 0.00$ & $0.33 \pm 0.00$ & 
$0.52 \pm 0.0$ & $0.03 \pm 0.001$ & $0.45 \pm 0$ & $0.06 \pm 0$ \\ 
\hspace{-2mm}$\epsilon=0.2$\hspace{-1mm} & 
$0.32 \pm 0.02$ & $0.23 \pm 0.01$ & $0.59 \pm 0.02$ & $0.53 \pm 0.01$ & 
$0.97 \pm 0.0$ & $0.03 \pm 0.002$ & $0.03 \pm 0$ & $0.08 \pm 0$
\end{tabular} 
}
\end{center}
\vspace{-6mm}
\end{table}

\subsection{Non-idealised case}

Here we compare real-world inference (specifically, dropout) to near-perfect inference (HMC) on real noisy data (MNIST). We use the same encoder as in the previous section to visualise the model's epistemic uncertainty in 2D (Fig.\ \ref{fig:sect6.2:uncertainty}).
Note the dropout uncertainty `holes' compared to HMC. 
We plot the dropout MI v.s.\ HMC MI for the grid of points $\cDg$ as before in Fig.\ \ref{fig:sect6.2:uncertainty-correlation}.

\begin{figure}[t]
\hspace{-4mm}
    \centering
    \begin{minipage}{0.7\textwidth}
    \centering
    \begin{subfigure}[b]{0.42\textwidth}
        \includegraphics[width=\textwidth]{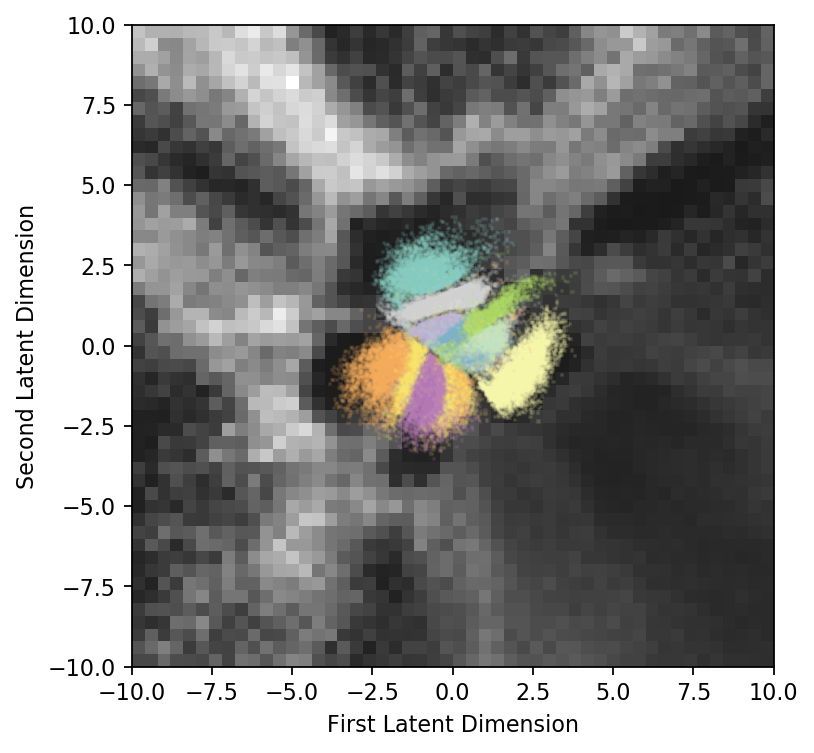}
        \caption{Dropout}
        \label{fig:sect6.2:uncertainty:dropout}
    \end{subfigure}\quad
    \begin{subfigure}[b]{0.42\textwidth}
        \includegraphics[width=\textwidth]{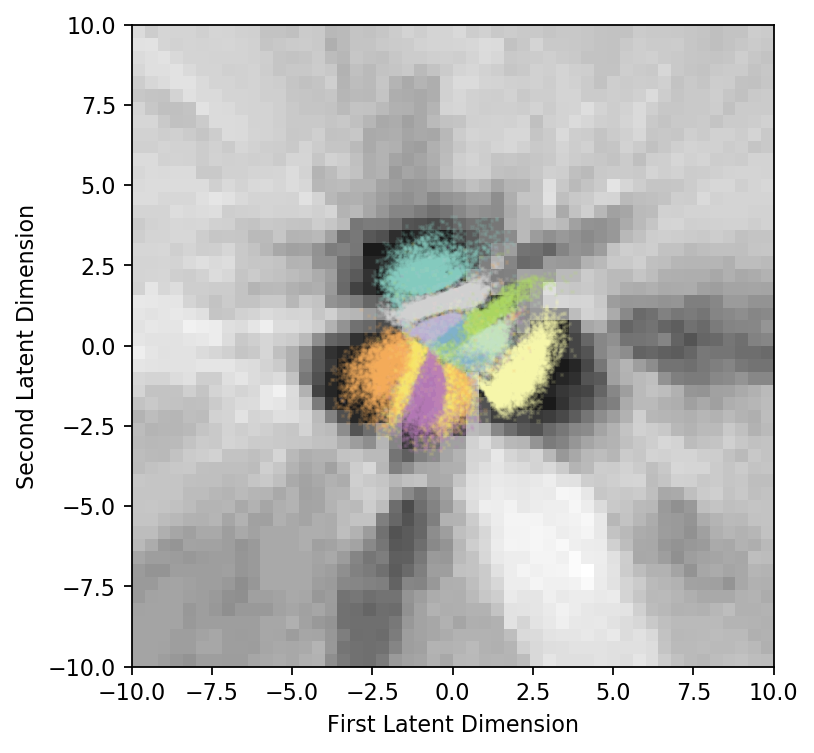}
        \caption{HMC}
    \end{subfigure}
    \vspace{4mm}
    \caption{MNIST projected into 2D latent space with projected image-space MI for \textit{dropout} and \textit{HMC} inference. \textbf{Note the holes in uncertainty far from the data for dropout}.}
    \label{fig:sect6.2:uncertainty}
    \end{minipage}\hfill
    \begin{minipage}{0.3\textwidth}
        \includegraphics[width=\textwidth]{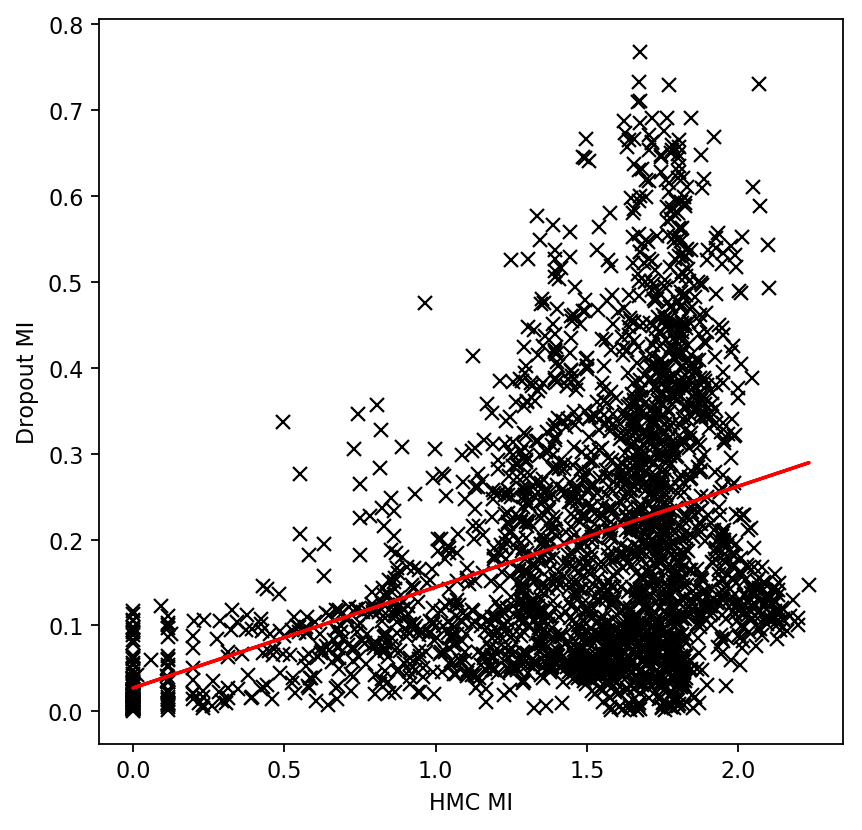}
        \caption{Dropout MI v.s.\ HMC MI (note the correlation, but also that \textbf{dropout holes} lead to zero MI when HMC MI is non-zero).}
        \vspace{2mm}
        \label{fig:sect6.2:uncertainty-correlation}
    \end{minipage}
\hspace{-4mm}

\vspace{3mm}
\hspace{-4mm}
    \centering
    \begin{minipage}[b]{0.39\textwidth}
        \includegraphics[width=\textwidth]{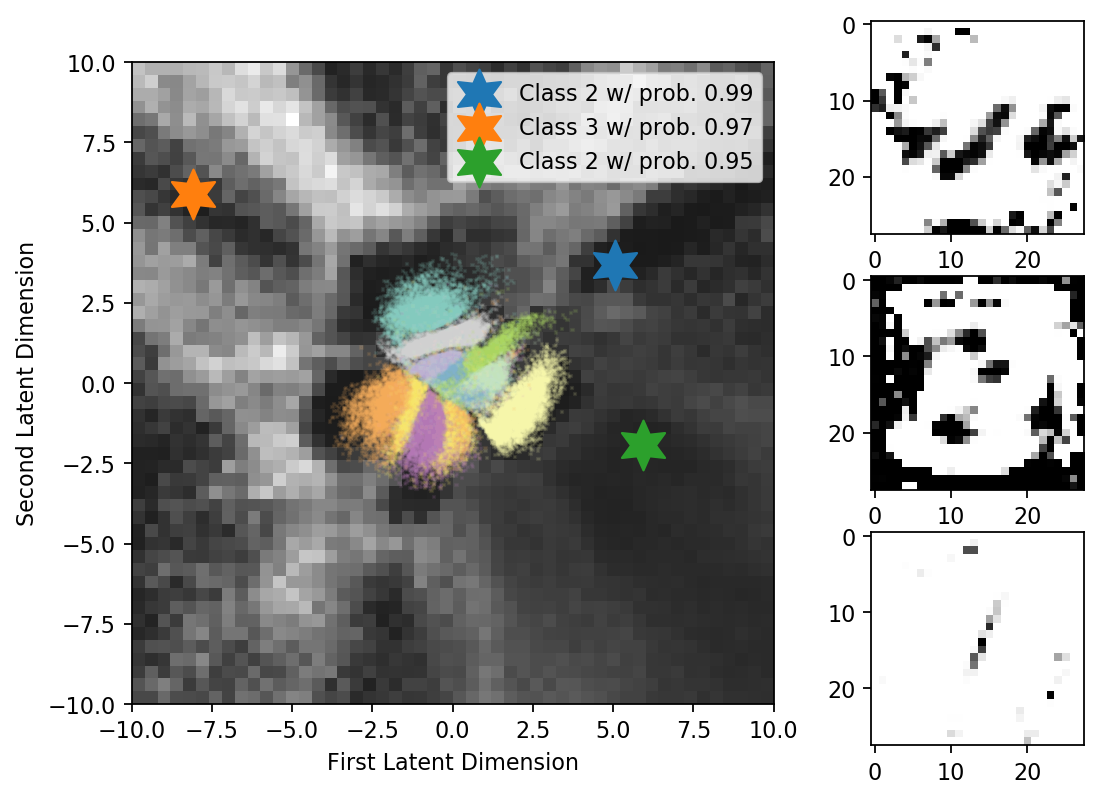}
        \caption{Three cherry-picked `garbage' images classifying with high output prob., from `holes' in dropout uncertainty over MNIST, and their locations in latent space.}
        \label{fig:sect6.3:cherry}
    \end{minipage}\hfill
    \begin{minipage}[b]{0.3\textwidth}
        \includegraphics[width=\textwidth]{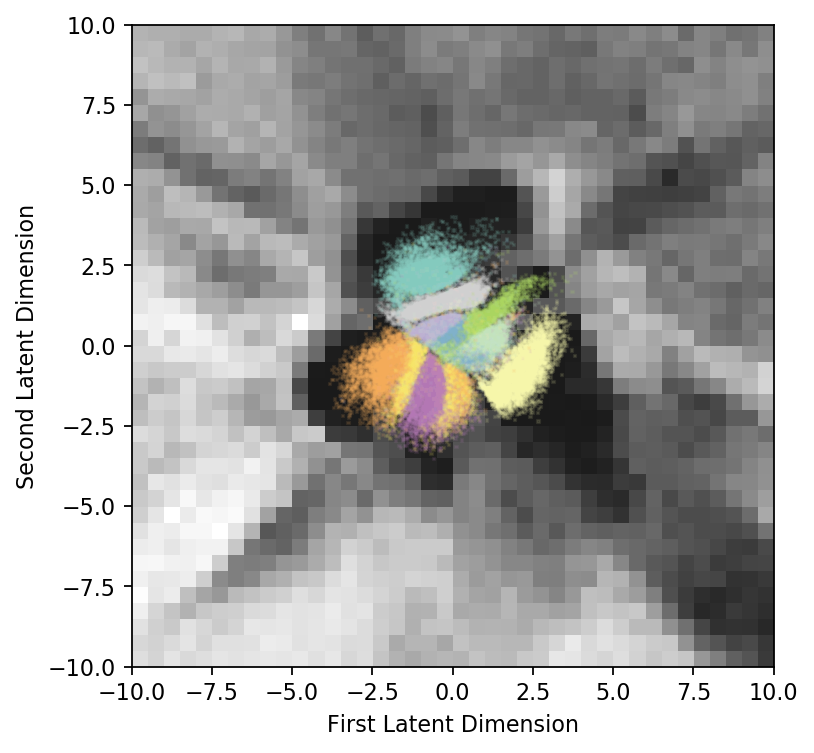}
        \caption{2D latent space with MI of \textit{dropout ensemble} on MNIST, showing \textbf{fewer uncertainty `holes'} v.s.\ dropout (\ref{fig:sect6.2:uncertainty:dropout}).}
        \label{fig:sect6.3:do-ens-uncertainty}
    \end{minipage}\hfill
    \begin{minipage}[b]{0.3\textwidth}
        \includegraphics[width=\textwidth]{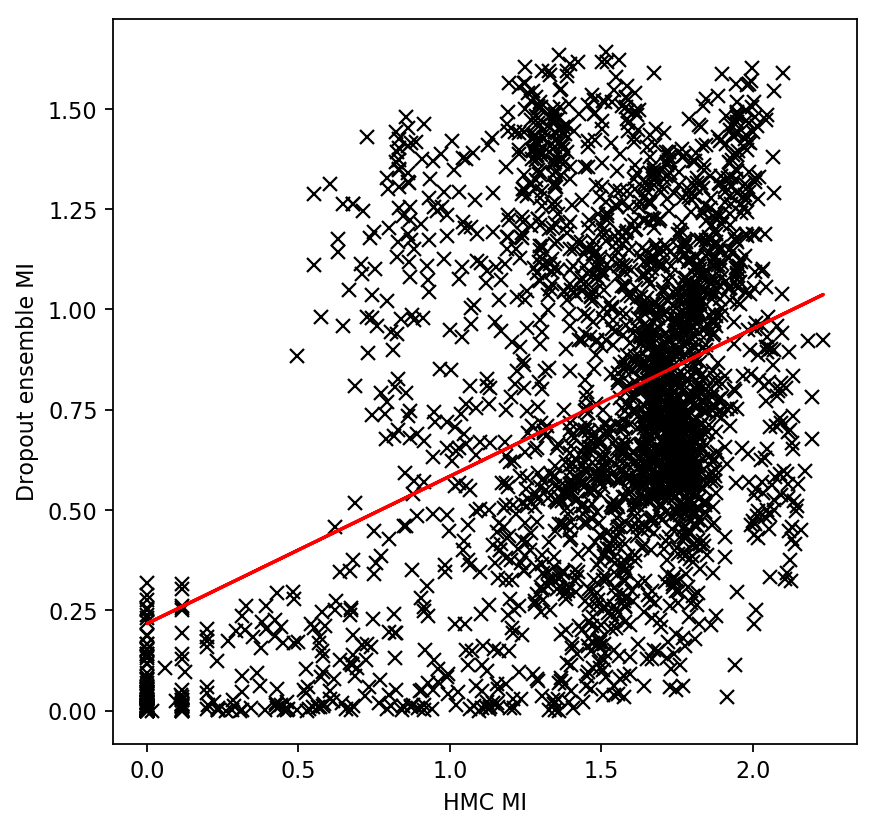}
        \caption{Dropout ensemble MI v.s.\ HMC MI (most of the mass on the right has been \textbf{shifted up}, i.e.\ dropout holes are covered).}
        \label{fig:sect6.3:hmc-do-ens-correlation}
    \end{minipage}
\hspace{-4mm}
\end{figure}

\subsection{New attack and defence}

We use the dropout failure case above to suggest a new attack generating `garbage' images with high output probability, which does not require gradient information but instead queries the model for its confidence: First, collect a dataset of images, and project to 2D. Grid-up the latent space (Fig.\ \ref{fig:sectD:attack-grid} in appendix \ref{app:results}) and query the model for uncertainty on each grid point.
Order by distance from the nearest training point, and decode the farthest latents with low MI (i.e.\ points far from the training set on which the model is confident). Example crafted images given in Fig.\ \ref{fig:sect6.3:cherry}.
We further suggest a mitigation here, using intuition from above: we use an ensemble of randomly initialised dropout models (Fig.\ \ref{fig:sect6.3:do-ens-uncertainty}), and show that ensemble correlation with HMC MI fixes the uncertainty `holes' to a certain extent (Fig.\ \ref{fig:sect6.3:hmc-do-ens-correlation}). In the appendix (\ref{app:results}) we give quantitative results comparing the success rate of the new attack to FGM's success rate, and show that dropout ensemble is more robust to the state-of-the-art MIM attack compared to a single dropout model. We further show that the equivalent \textit{Deterministic} model ensemble uncertainty contains more uncertainty `holes' than the dropout ensemble.

\subsection{Real-world cats vs dogs classification}

We extend the results above and show that an ensemble of dropout models is more robust than a single dropout model using a VGG13 \citep{Simonyan15} variant on the ASIRRA \citep{elson2007asirra} cats and dogs classification dataset.
%
%
We retrained a VGG13 variant (\citep{Simonyan15}, with a reduced number of FC units) on the ASIRRA \citep{elson2007asirra} cats and dogs classification dataset, with Concrete dropout \citep{gal2017concrete} layers added before every convolution. We compared the robustness of a single Concrete dropout model to that of an ensemble following the experiment setup of \citep{smith2018understanding}. Here we used the FGM attack with $\epsilon=0.2$ and infinity norm. Example adversarial images are shown in Fig.\ \ref{fig:sectE:examples}. Table \ref{table:sectE:auc} shows the AUC of different MI thresholds for declaring `this is an adversarial example!', for all images, as well as for successfully perturbed images only (S). Full ROC plots are given in Fig.\ \ref{fig:sectE:roc}. We note that the more powerful attacks succeed in fooling this VGG13 model, whereas \textit{dropout Resnet-50} based models seem to be more robust \citep{smith2018understanding}.
We leave the study of model architecture effect on uncertainty and robustness for future research.

\begin{table}[h]
\vspace{-5mm}
\caption{AUC for MIM attack on VGG13 models (\textit{higher is better}), trained on real-world cats v.s.\ dogs classification, for both a single Concrete dropout model, as well as for an ensemble of 5 Concrete dropout models}
\label{table:sectE:auc}
\begin{center}
\begin{tabular}{l|cc}
Model & AUC & AUC (S) \\ 
\hline 
Concrete Dropout & $0.63 $ & $0.61 $ \\ 
Concrete Dropout Ensemble & $0.77 $ & $0.74 $
\end{tabular} 
\end{center}
\vspace{-6mm}
\end{table}


\begin{figure}[h]
    \centering
    \begin{minipage}[b]{0.37\textwidth}
        \includegraphics[width=\textwidth]{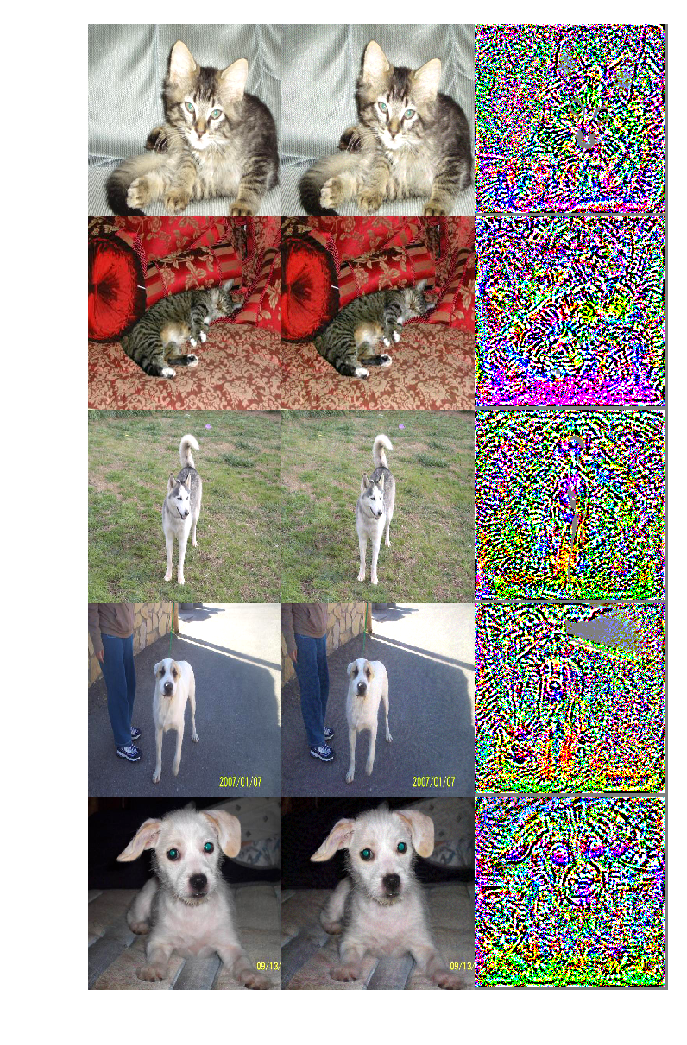}
        \caption{Example dataset images, generated adversarial counterparts, and the perturbation.}
        \label{fig:sectE:examples}
    \end{minipage}\hfill
    \begin{minipage}[b]{0.58\textwidth}
        \includegraphics[width=\textwidth]{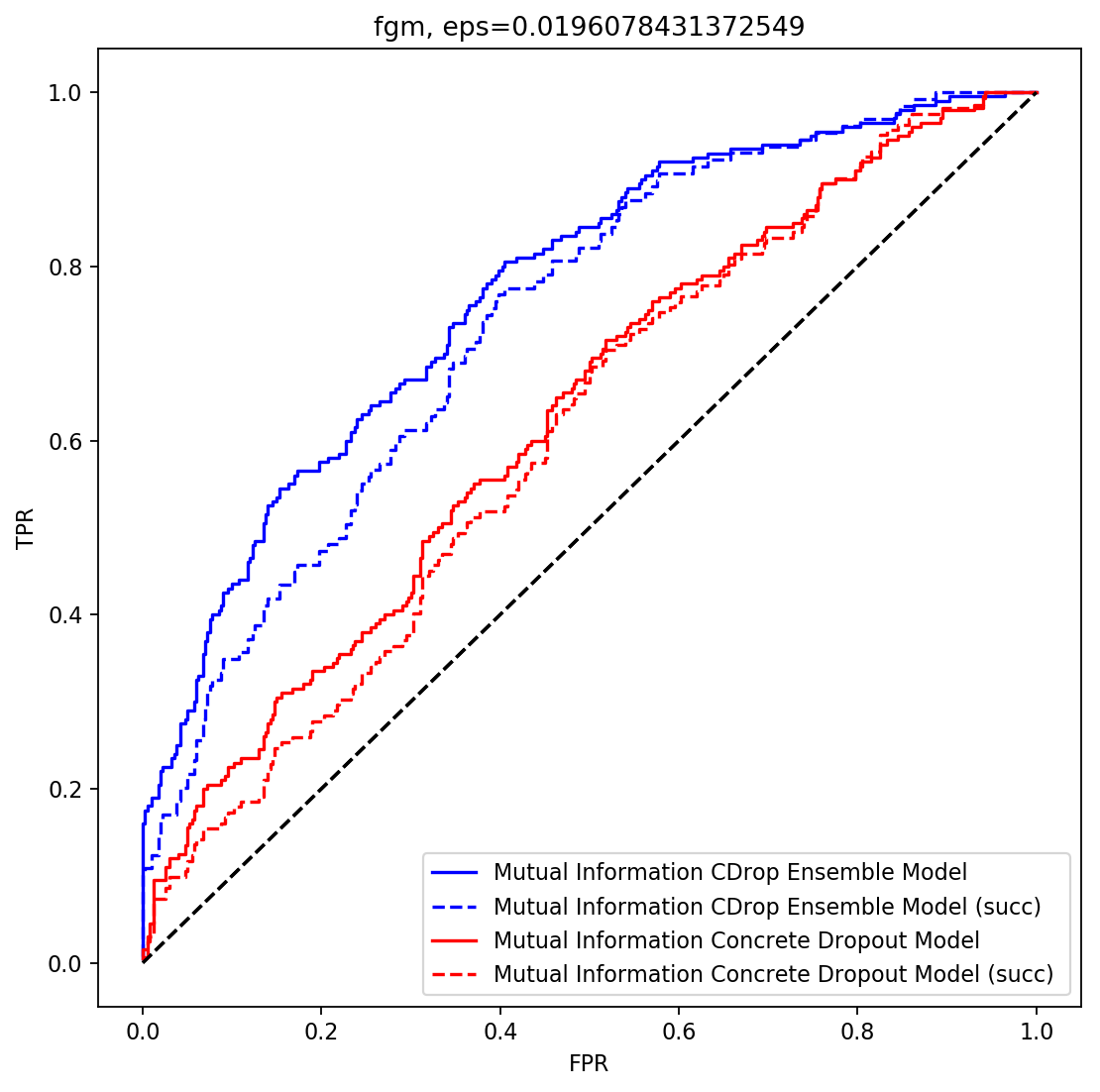}
        \caption{ROC plot of dropout and dropout ensemble using MI thresholding to declare `adversarial', evaluated both on all examples, and on successfully perturbed examples (marked with `succ').}
        \label{fig:sectE:roc}
    \end{minipage}
\end{figure}

\section{Discussion}
We presented several idealised models which could satisfy our set conditions for robustness, opening the door for research into how various practical tools can approximate our set conditions. We highlighted that the main difficulty with modern BNNs is not coverage, but rather that approximate inference doesn't increase the uncertainty fast enough with practical BNN tools (we show this in figures \ref{fig:sect6.2:uncertainty:dropout}, demonstrating that we have holes in the dropout uncertainty). In contrast, HMC (which is not scalable for practical applications) does not have such uncertainty holes. One of our main conclusions is therefore that we need improved inference techniques in BNNs.

Further, designing density models over complex data such as images is challenging, and the claim that we can extract this information from a probabilistic \textit{discriminative} model is not straightforward. This result also gives intuition into why dropout, a technique shown to relate to Bayesian modelling, seems to be effective in identifying adversarial examples. 
Lastly, our analysis has practical implications for the field as well. It highlights questions of interest to direct future research, such as which model architectures satisfy the conditions above best, and reveals flaws with current approaches.

\section*{Acknowledgements}
We thank Mark van der Wilk, Yingzhen Li, Ian Goodfellow, Nicolas Papernot, and others, for feedback and comments on this work.
This research was supported by The Alan Turing Institute. We gratefully acknowledge the support of NVIDIA Corporation with the donation of the Titan Xp GPU used for this research.

\bibliographystyle{plainnat}
\begin{small}
\bibliography{refs}

\begin{thebibliography}{34}
\providecommand{\natexlab}[1]{#1}
\providecommand{\url}[1]{\texttt{#1}}
\expandafter\ifx\csname urlstyle\endcsname\relax
  \providecommand{\doi}[1]{doi: #1}\else
  \providecommand{\doi}{doi: \begingroup \urlstyle{rm}\Url}\fi

\bibitem[Carlini and Wagner(2017)]{carlini2017adversarial}
Nicholas Carlini and David Wagner.
\newblock Adversarial examples are not easily detected: Bypassing ten detection
  methods.
\newblock \emph{arXiv preprint arXiv:1705.07263}, 2017.

\bibitem[Chollet(2015)]{keras2015}
François Chollet.
\newblock Keras, 2015.
\newblock URL \url{https://github.com/fchollet/keras}.
\newblock GitHub repository.

\bibitem[Dong et~al.(2017)Dong, Liao, Pang, Su, Hu, Li, and Zhu]{dongboosting}
Yinpeng Dong, Fangzhou Liao, Tianyu Pang, Hang Su, Xiaolin Hu, Jianguo Li, and
  Jun Zhu.
\newblock Boosting adversarial attacks with momentum.
\newblock \emph{arXiv preprint arXiv:1710.06081}, 2017.

\bibitem[Elson et~al.(2007)Elson, Douceur, Howell, and Saul]{elson2007asirra}
Jeremy Elson, John~(JD) Douceur, Jon Howell, and Jared Saul.
\newblock Asirra: A captcha that exploits interest-aligned manual image
  categorization.
\newblock In \emph{Proceedings of 14th ACM Conference on Computer and
  Communications Security (CCS)}. Association for Computing Machinery, Inc.,
  October 2007.

\bibitem[Fawzi et~al.(2018)Fawzi, Fawzi, and Frossard]{fawzi2018analysis}
Alhussein Fawzi, Omar Fawzi, and Pascal Frossard.
\newblock Analysis of classifiers’ robustness to adversarial perturbations.
\newblock \emph{Machine Learning}, 107\penalty0 (3):\penalty0 481--508, 2018.

\bibitem[Feinman et~al.(2017)Feinman, Curtin, Shintre, and
  Gardner]{feinman2017detecting}
Reuben Feinman, Ryan~R Curtin, Saurabh Shintre, and Andrew~B Gardner.
\newblock Detecting adversarial samples from artifacts.
\newblock \emph{arXiv preprint arXiv:1703.00410}, 2017.

\bibitem[Gal(2016)]{gal2016uncertainty}
Yarin Gal.
\newblock \emph{Uncertainty in deep learning}.
\newblock PhD thesis, University of Cambridge, 2016.

\bibitem[Gal and Turner(2015)]{Gal2015Improving}
Yarin Gal and Richard Turner.
\newblock Improving the {G}aussian process sparse spectrum approximation by
  representing uncertainty in frequency inputs.
\newblock In \emph{Proceedings of the 32nd International Conference on Machine
  Learning (ICML-15)}, 2015.

\bibitem[Gal et~al.(2017)Gal, Hron, and Kendall]{gal2017concrete}
Yarin Gal, Jiri Hron, and Alex Kendall.
\newblock Concrete dropout.
\newblock \emph{arXiv preprint arXiv:1705.07832}, 2017.

\bibitem[Gilmer et~al.(2018)Gilmer, Metz, Faghri, Schoenholz, Raghu,
  Wattenberg, and Goodfellow]{advspheres2018}
Justin Gilmer, Luke Metz, Fartash Faghri, Samuel~S. Schoenholz, Maithra Raghu,
  Martin Wattenberg, and Ian~J. Goodfellow.
\newblock Adversarial spheres.
\newblock \emph{CoRR}, abs/1801.02774, 2018.

\bibitem[Goodfellow et~al.(2014)Goodfellow, Shlens, and
  Szegedy]{goodfellow2014explaining}
Ian~J Goodfellow, Jonathon Shlens, and Christian Szegedy.
\newblock Explaining and harnessing adversarial examples.
\newblock \emph{arXiv preprint arXiv:1412.6572}, 2014.

\bibitem[Grosse et~al.(2017)Grosse, Papernot, Manoharan, Backes, and
  McDaniel]{grosse2017adversarial}
Kathrin Grosse, Nicolas Papernot, Praveen Manoharan, Michael Backes, and
  Patrick McDaniel.
\newblock Adversarial examples for malware detection.
\newblock In \emph{European Symposium on Research in Computer Security}, pages
  62--79. Springer, 2017.

\bibitem[Hein and Andriushchenko(2017)]{hein2017formal}
Matthias Hein and Maksym Andriushchenko.
\newblock Formal guarantees on the robustness of a classifier against
  adversarial manipulation.
\newblock In \emph{Advances in Neural Information Processing Systems}, pages
  2263--2273, 2017.

\bibitem[Kendall and Gal(2017)]{KendallGal2017UncertaintiesB}
Alex Kendall and Yarin Gal.
\newblock {What Uncertainties Do We Need in Bayesian Deep Learning for Computer
  Vision?}
\newblock In \emph{Advances in Neural Information Processing Systems 30
  (NIPS)}, 2017.

\bibitem[Kreuk et~al.(2018)Kreuk, Barak, Aviv{-}Reuven, Baruch, Pinkas, and
  Keshet]{Kreuk2018}
Felix Kreuk, Assi Barak, Shir Aviv{-}Reuven, Moran Baruch, Benny Pinkas, and
  Joseph Keshet.
\newblock Adversarial examples on discrete sequences for beating whole-binary
  malware detection.
\newblock \emph{CoRR}, abs/1802.04528, 2018.

\bibitem[LeCun and Cortes(1998)]{lecun1998mnist}
Yann LeCun and Corinna Cortes.
\newblock The {MNIST} database of handwritten digits, 1998.

\bibitem[LeCun et~al.(1998)LeCun, Bottou, Bengio, and
  Haffner]{lecun1998gradient}
Yann LeCun, L{\'e}on Bottou, Yoshua Bengio, and Patrick Haffner.
\newblock Gradient-based learning applied to document recognition.
\newblock \emph{Proceedings of the IEEE}, 86\penalty0 (11):\penalty0
  2278--2324, 1998.

\bibitem[Li(2018)]{li2018generative}
Yingzhen Li.
\newblock Are generative classifiers more robust to adversarial attacks?
\newblock \emph{ICLR workshop track}, 2018.

\bibitem[Li and Gal(2017)]{li2017dropout}
Yingzhen Li and Yarin Gal.
\newblock Dropout inference in {Bayesian} neural networks with
  alpha-divergences.
\newblock \emph{arXiv preprint arXiv:1703.02914}, 2017.

\bibitem[MacKay(1992)]{mackay1992practical}
David~JC MacKay.
\newblock A practical {B}ayesian framework for backpropagation networks.
\newblock \emph{Neural Computation}, 4\penalty0 (3):\penalty0 448--472, 1992.

\bibitem[Matthews et~al.()Matthews, Hron, Rowland, Turner, and
  Ghahramani]{matthewsgaussian}
AGDG Matthews, J~Hron, M~Rowland, RE~Turner, and Z~Ghahramani.
\newblock Gaussian process behaviour in wide deep neural networks.
\newblock In \emph{Proceedings of the 6th International Conference on Learning
  Representations}.

\bibitem[Neal(1995)]{neal1995bayesian}
Radford~M Neal.
\newblock \emph{Bayesian learning for neural networks}.
\newblock PhD thesis, University of Toronto, 1995.

\bibitem[Nguyen et~al.(2015)Nguyen, Yosinski, and Clune]{nguyen2015deep}
Anh Nguyen, Jason Yosinski, and Jeff Clune.
\newblock Deep neural networks are easily fooled: High confidence predictions
  for unrecognizable images.
\newblock In \emph{Proceedings of the IEEE Conference on Computer Vision and
  Pattern Recognition}, pages 427--436, 2015.

\bibitem[Papernot and McDaniel(2018)]{papernot2018deep}
Nicolas Papernot and Patrick McDaniel.
\newblock Deep k-nearest neighbors: Towards confident, interpretable and robust
  deep learning.
\newblock \emph{arXiv preprint arXiv:1803.04765}, 2018.

\bibitem[Papernot et~al.(2016)Papernot, McDaniel, Wu, Jha, and
  Swami]{papernot2016distillation}
Nicolas Papernot, Patrick McDaniel, Xi~Wu, Somesh Jha, and Ananthram Swami.
\newblock Distillation as a defense to adversarial perturbations against deep
  neural networks.
\newblock In \emph{Security and Privacy (SP), 2016 IEEE Symposium on}, pages
  582--597. IEEE, 2016.

\bibitem[Peck et~al.(2017)Peck, Roels, Goossens, and Saeys]{peck2017lower}
Jonathan Peck, Joris Roels, Bart Goossens, and Yvan Saeys.
\newblock Lower bounds on the robustness to adversarial perturbations.
\newblock In \emph{Advances in Neural Information Processing Systems}, pages
  804--813, 2017.

\bibitem[Rasmussen and Williams(2006)]{Rasmussen2005Gaussian}
Carl~Edward Rasmussen and Christopher K.~I. Williams.
\newblock \emph{Gaussian Processes for Machine Learning (Adaptive Computation
  and Machine Learning)}.
\newblock The MIT Press, 2006.
\newblock ISBN 026218253X.

\bibitem[Rawat et~al.(2017)Rawat, Wistuba, and Nicolae]{rawat2017adversarial}
Ambrish Rawat, Martin Wistuba, and Maria-Irina Nicolae.
\newblock Adversarial phenomenon in the eyes of {B}ayesian deep learning.
\newblock \emph{arXiv preprint arXiv:1711.08244}, 2017.

\bibitem[Simonyan and Zisserman(2015)]{Simonyan15}
K.~Simonyan and A.~Zisserman.
\newblock Very deep convolutional networks for large-scale image recognition.
\newblock In \emph{International Conference on Learning Representations}, 2015.

\bibitem[Smith and Gal(2018)]{smith2018understanding}
Lewis Smith and Yarin Gal.
\newblock Understanding measures of uncertainty for adversarial example
  detection.
\newblock \emph{UAI}, 2018.

\bibitem[Szegedy et~al.(2013)Szegedy, Zaremba, Sutskever, Bruna, Erhan,
  Goodfellow, and Fergus]{szegedy2013intriguing}
Christian Szegedy, Wojciech Zaremba, Ilya Sutskever, Joan Bruna, Dumitru Erhan,
  Ian Goodfellow, and Rob Fergus.
\newblock Intriguing properties of neural networks.
\newblock \emph{arXiv preprint arXiv:1312.6199}, 2013.

\bibitem[Tanay and Griffin(2016)]{tanay2016boundary}
Thomas Tanay and Lewis Griffin.
\newblock A boundary tilting persepective on the phenomenon of adversarial
  examples.
\newblock \emph{arXiv preprint arXiv:1608.07690}, 2016.

\bibitem[van~der Wilk et~al.(2017)van~der Wilk, Rasmussen, and
  Hensman]{van2017convolutional}
Mark van~der Wilk, Carl~Edward Rasmussen, and James Hensman.
\newblock Convolutional {G}aussian processes.
\newblock In \emph{Advances in Neural Information Processing Systems}, pages
  2845--2854, 2017.

\bibitem[Yang et~al.(2017)Yang, Kong, Xie, and Gunter]{yang2017malware}
Wei Yang, Deguang Kong, Tao Xie, and Carl~A Gunter.
\newblock Malware detection in adversarial settings: Exploiting feature
  evolutions and confusions in android apps.
\newblock In \emph{Proceedings of the 33rd Annual Computer Security
  Applications Conference}, pages 288--302. ACM, 2017.

\end{thebibliography}
\end{small}

\newpage
\newpage
\appendix

\section*{Appendix}

\section{Coverage}
\label{app:coverage}
To see why low test error implies high coverage, we present a simple argument that relies on the idealised case of zero expected error (error w.r.t.\ the data distribution) and the assumption that test error is representative of the expected error. Define a model $f(\bx)$ to be invariant to a transformation $T$ almost everywhere (a.e.) when $f(\bx)=f(T(\bx))$ for all $\bx \in \cX$ up to a zero measure set (i.e.\ for almost all $\bx \in \cX$); Assume that the data distribution has no ambiguity (i.e.\ a point $\bx$ with non-zero probability for $y$ must have zero probability for $y' \neq y$). If the model $f(\bx)$ has zero \textit{expected error} then $f(\bx) = y$ a.e.\ in $\cX$ with a corresponding $y$ having non-zero probability conditioned on $\bx$, $p(y | \bx)$.
For all transformations $T \in \cT$ to which the data distribution is invariant, i.e.\ $p(y | \bx) = p(y | T(\bx))$, there exists that $y$ has non-zero probability conditioned on $T(\bx)$ as well, and therefore (from the lack of ambiguity assumption) it must hold that $f(T(\bx)) = y = f(\bx)$. Therefore the model $f(\bx)$ is invariant to $T$ a.e.\ as well.

Empirically, we observe near-idealised HMC LeNet BNN and dropout LeNet variants to have low test error with test points following the same distribution as the train points (we got >99\% accuracy on the MNIST test set with a dropout BNN in our experiments). We use this as evidence towards the claim that our suggested models generalise well (have high coverage).
Further, zero coverage for \textit{invalid} inputs (i.e.\ the model saying `don't know' for out-of-distribution examples, for example by giving uniform probabilities) is a desired property of our model. 

Note that to get \textit{full coverage} (i.e.\ not rejecting a single valid point) we must assume that for every valid input $\bx'$ there exists some transformation $T \in \cT$ mapping some training point $\bx$ to $\bx'$. This is more difficult to formalise for non-idealised models though.

\section{Generalisation to other idealised models}
\label{app:generalisation}
Here we discuss which idealised models satisfy our two conditions. The class of idealised models which satisfy our defined properties above is varied. The question of interest is what models are most suitable for which task, and which models approximate the idealised properties best.

We contrast several idealised models on the spheres dataset \citep{advspheres2018} to assess which could and could not satisfy our conditions. We use the spheres dataset here since we know the set of transformations an idealised model must be invariant to (i.e.\ if a model can satisfy the first condition). We will look at a NN with ReLU non-linearities (i.e.\ with no special invariances), a BNN with the same structure, a NN which is rotation invariant, a BNN which is rotation invariant, an RBF network (with either architecture), standard nearest neighbours, and nearest neighbours in feature space with some deterministic feature extractor, all using finite training sets.
\begin{enumerate}
\item
A NN with no invariances can have adversarial examples (as demonstrated in \citep{advspheres2018}). 
\item
A NN with rotation invariances will not have adversarial examples \emph{on the spheres} (following our argument in section \ref{sect:prelimenaries}). However, the NN \emph{might have} `garbage' adversarial examples which classify with high output probability far away from the data, where the model might be wrongly confident (we show this in our experiments below as well). An idealised NN can predict with arbitrary high output probability far away from the training set, and there is no way to enforce the model not to do so -- we can't iterate over all points `not in the train set' and force a standard deep NN model to predict with near uniform output probability on these points.
\item
A BNN with no invariances \emph{cannot have adversarial examples} on the sphere but will have low coverage. It will output a prediction for the finite train set points, and will output `don't know' (i.e.\ near-uniform probability) for all other points on the sphere which it didn't have in the train set. The BNN \emph{will not have} garbage adversarial examples far from the data (since the model averages many different functions, each giving different values far from the data, in effect increasing its uncertainty and pushing the predictive probability to uniform).
\item
A BNN with rotation invariances will \emph{have no adversarial examples} on the sphere (following the arguments in points 2 and 3) and with \textit{full coverage} for all sphere points. It will have no garbage adversarial examples (following the argument in point 3). As mentioned we still assume a finite training set for the BNN, but having the model rotation invariant makes this equivalent to the case in point 3 with an infinite train set which includes all sphere points.
\item
An idealised RBF network which collapses to uniform prediction fast enough follows the same intuition of idealised BNNs above -- both for a rotation invariant RBF network as well as for a model with no invariances.
\item
Nearest neighbour which uses thresholding to declare `don't know' and with a finite dataset (again, all above also used finite dataset and invariances built into the model itself) will have no adversarial examples, but will have low coverage following the same arguments in point 3.
The only way to fix the issue of low coverage with standard nearest neighbours is to explicitly assume that the model is trained with an infinite training set with all sphere points (in which case it will have proper coverage). Note the difference to the BNN / NN models which use a finite training set with invariances built into the model to implicitly augment the train set. A possible way to alleviate this issue is to perform nearest neighbour in feature space, building invariances into the distances nearest neighbours uses, allowing a finite training set to be used to get full coverage. This idea is developed further in \citep{papernot2018deep}.
\end{enumerate}

Probability thresholding using the Bayesian approach plays an important role in our proof, but is not the only way to declare `don't know' as we saw above. Note though that nearest neighbour thresholding is not trivial: Even though one might define for example `distance in input space to nearest neighbour' in order to declare an output as `don't know', in practice thresholding the distance in input space (or, for that matter, in feature space) can affect points differently in different parts of the input space (e.g.\ we want to have low threshold in high density regions v.s.\ high threshold in low density regions). The Bayesian approach gives tools to do the thresholding in the output probability space, further allowing us to define a tolerance to false positives if our uncertainty is calibrated.
Lastly, we note that we can't simply define a third class (class 2) to indicate `don't know', even in the rotation invariant NN. The quotient group of all distinct points in our spheres dataset (after identifying all points on the surface of a sphere with some radius $r$ as identical to each other) is still infinite: It is all the non-negative reals (corresponding to sphere radii). Out of these, one point (r=1) corresponds to class 1, one point (r=1.3) corresponds to class 0. In this case there exist infinitely many points that will be assigned class 2. The point of the augmented train set trick from the proof is to induce finite train sets over which we can define the invariant model loss (which is feasible in practice). 
But with the `don't know' class the train set (in either case) is infinite, which means it is infeasible to define a loss over it. 
For these reasons we chose to continue our developments studying idealised and near-idealised BNNs.

\section{Image density calculation}
\label{app:density}
\newcommand{\bz}{\mathbf{z}}
For our MMNIST dataset we have an analytical expression for the density in the latent space for each class $c$: a Gaussian $p(\bz | c)$ (with $\bz$ the latent variable). With this density we can calculate the density of an observed image $\bx$ by MC integration over the latent space:
$$
p(\bx) = \int p(\bx | \bz) p(\bz | c) p(c) \text{d} \bz \text{d} c \approx \frac{1}{T} \sum_{t=1}^T p(\bx|\widehat{\bz}_t)
$$
with $\widehat{\bz}_t \sim p(\bz | \widehat{c}_t)$ and $\widehat{c}_t \sim p(c)$. In practice we use importance sampling for density calculations. In Fig.\ \ref{fig:sectD:z-vs-image-density} we show that the ground truth latent space density correlates strongly with the image density obtained from this estimator on test MMNIST images.

\section{More empirical results}
\label{app:results}

\begin{figure}[h]
    \centering
    \begin{subfigure}{0.4\linewidth}
        \includegraphics[width=\textwidth]{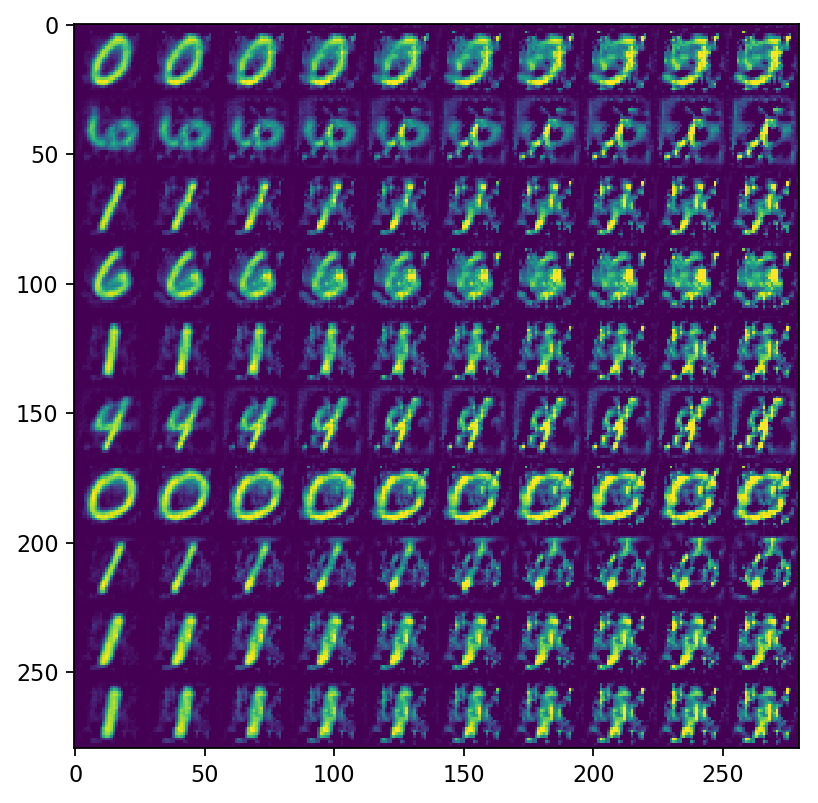}
        \caption{Trajectories from targeted attack (FGM) on Manifold MNIST with a deterministic model. \textbf{Note that the first few steps do not deform the images much, but that the density (Fig.\ \ref{fig:sect6.1:density}) is still decreasing with these.}}
        \label{fig:sectD:trajectories}
    \end{subfigure}\quad
    \begin{subfigure}{0.4\linewidth}
        \includegraphics[width=\textwidth]{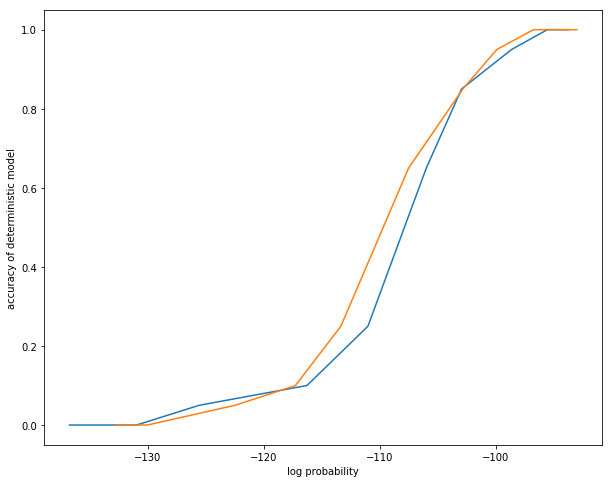}
        \caption{Prediction accuracy v.s.\ image density on the trajectory images from the targeted attack (FGM), with Manifold MNIST and a deterministic model. \textbf{Note that the accuracy diminishes as the image density becomes smaller, i.e.\ as the images become adversarial; In other words, the attack is \textit{successful}.}}
        \label{fig:sectD:accs}
    \end{subfigure}
    \vspace{4mm}
    \caption{Image trajectories and their accuracy v.s.\ density.}
\vspace{10mm}
\end{figure}

\vspace{4mm}
\begin{figure}[h]
    \begin{minipage}{0.31\linewidth}
        \includegraphics[width=\textwidth]{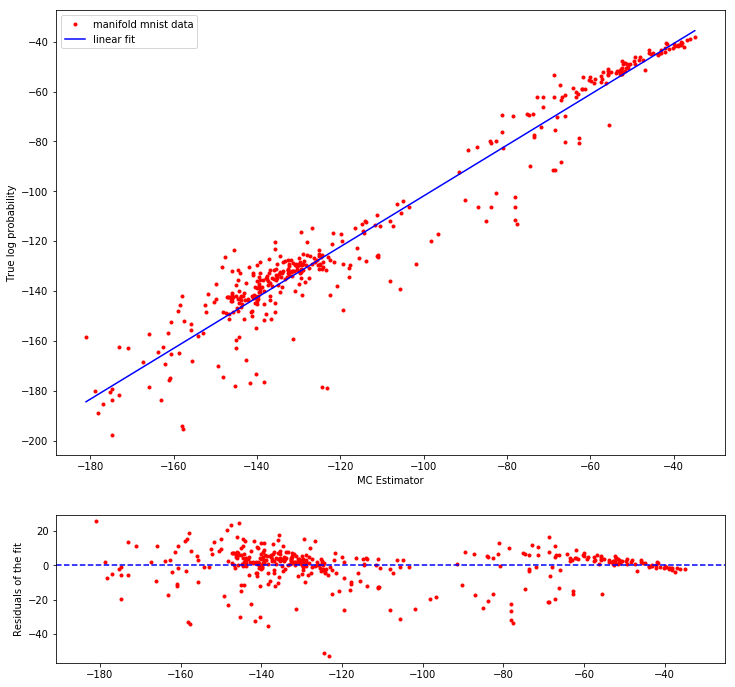}
        \caption{Ground truth latent log probability v.s.\ image density obtained through importance sampling (\S\ref{app:density}) for test images from the MMNIST dataset.}
        \label{fig:sectD:z-vs-image-density}
    \end{minipage}\hfill
    \begin{minipage}{0.31\linewidth}
        \includegraphics[width=\textwidth]{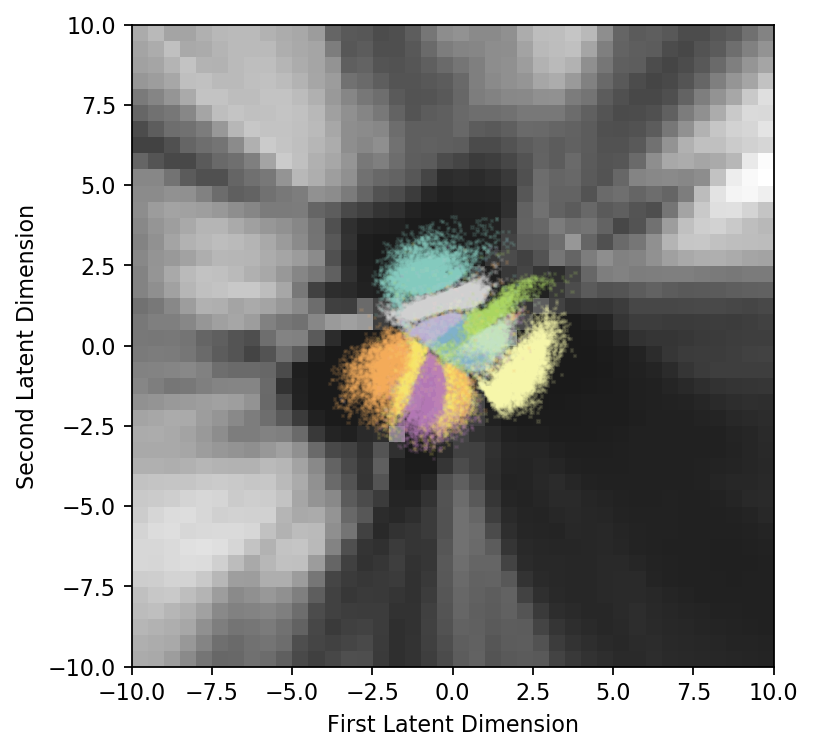}
        \caption{2D latent plot of deterministic NN MI. Note the large gaps in uncertainty (larger than dropout ensemble's gaps).}
        \label{fig:sectD:det-ens-uncertainty}
    \end{minipage}\hfill
    \begin{minipage}{0.31\linewidth}
        \includegraphics[width=\textwidth]{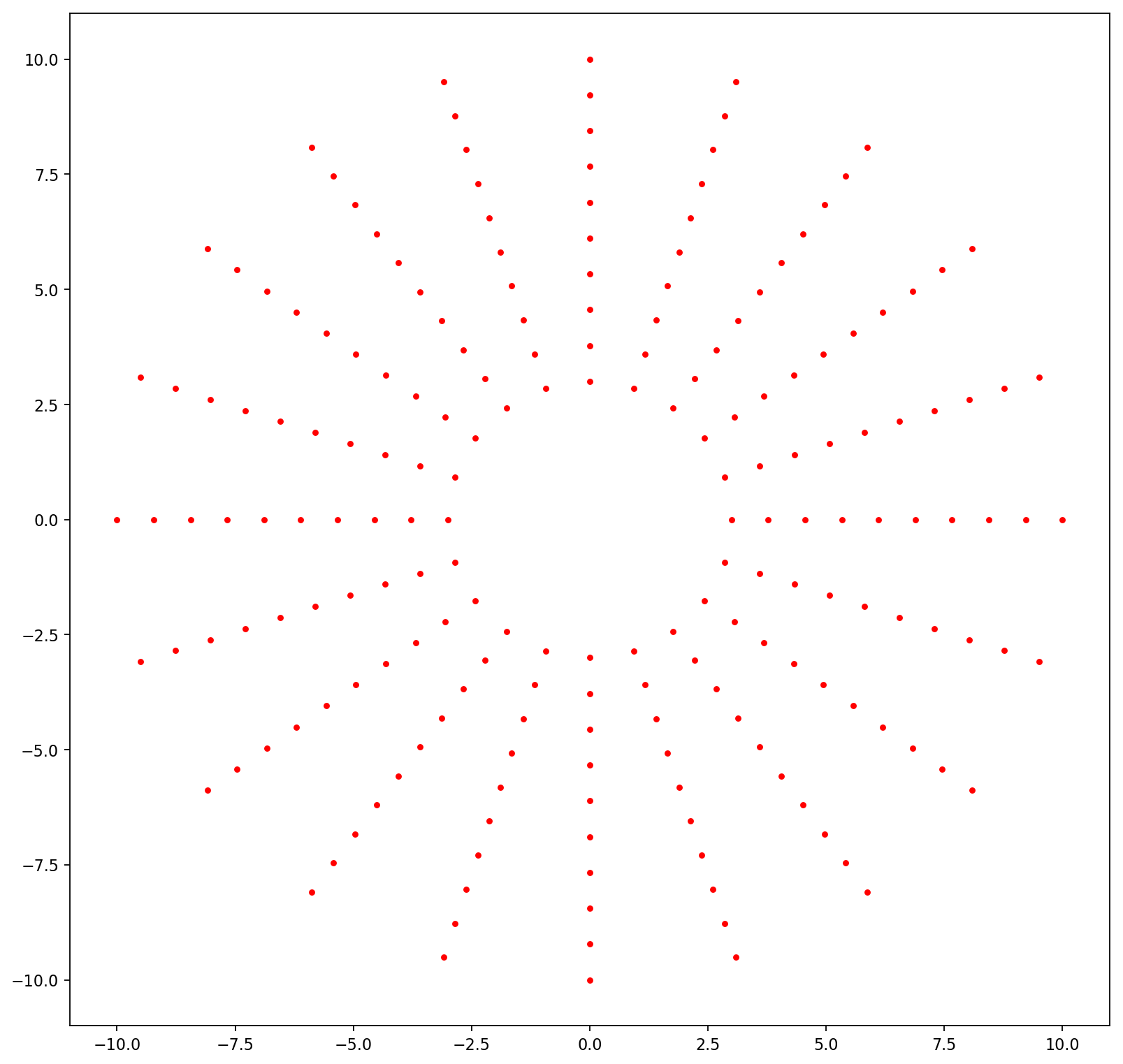}
        \caption{New attack; This figure depicts our grid over latent space to look for uncertainty `holes' far from the data.}
        \label{fig:sectD:attack-grid}
    \end{minipage}
\vspace{10mm}
\end{figure}

We next assess the success rate of getting garbage images which classify with high output probability ($>0.9$), comparing our new latent space attack which does not use gradient information to the untargeted FGS attack (which does use gradients), on a dropout NN with MNIST. Results are shown in Table \ref{table:sectD:attack}. Note though that the sample size used here is rather small (15 generated images).

\begin{table}[h]
\caption{Success rate (\textit{higher is better for attack}) generating garbage examples which classify with probability greater than 0.9.}
\label{table:sectD:attack}
\begin{center}
\begin{tabular}{l|c}
Attack & Success rate \\ 
\hline 
Latent space attack & $0.76$ \\ 
Untargeted FGM & $0.60$
\end{tabular} 
\end{center}
\end{table}

Table \ref{table:sectD:defence} shows the robustness of dropout ensemble v.s.\ dropout with a MIM attack on MNIST; Note the improved robustness for the ensemble.
\begin{table}[h]
\caption{Success rate (\textit{lower is better for defence}) in changing image label with the MIM attack on both dropout and dropout ensemble (4 dropout models). Shown are mean and std with 5 experiment repetitions.}
\label{table:sectD:defence}
\begin{center}
\begin{tabular}{l|cc}
Perturbation magnitude & Dropout & Dropout ensemble \\ 
\hline 
$\epsilon=0.1$ & $0.37 \pm 0.02$ & $0.26 \pm 0.00$ \\ 
$\epsilon=0.175$ & $0.91 \pm 0.00$ & $0.79 \pm 0.02$
\end{tabular} 
\end{center}
\end{table}


\end{document}